\newtheorem{theorem}{Theorem}
\definecolor{custompurple}{rgb}{0.408, 0.2, 0.6}  
\definecolor{customgreen}{rgb}{0.627, 0.808, 0.388}  
\definecolor{customblue}{rgb}{0.216, 0.329, 0.573}  
\newtcolorbox{cvbox}[1][]{
    enhanced,
    after skip=8mm,
    title=#1,
    fonttitle=\sffamily\bfseries\color{white},
    coltitle=white,
    colbacktitle=black!75!white, 
    titlerule= 0pt,         
    overlay={%
        \ifcase\tcbsegmentstate
        \or%
        \else%
        \fi%
    }
    colback = gray!5!white,         
    colframe = black!75     
    }
\title{Detection and Mitigation of Hallucination in Large Reasoning Models: A Mechanistic Perspective}
\author{%
  Zhongxiang Sun  \quad Qipeng Wang \quad Haoyu Wang \quad
  Xiao Zhang \quad Jun Xu\thanks{Corresponding author. Work partially done at Engineering Research Center of Next-Generation Intelligent Search and Recommendation, Ministry of Education.} \\
  Gaoling School of Artificial Intelligence\\ Renmin University of China \\
  Beijing, China \\
  \texttt{\{sunzhongxiang\}@ruc.edu.cn}
}
\begin{document}

\maketitle

\begin{abstract}
Large Reasoning Models (LRMs) have shown impressive capabilities in multi-step reasoning tasks. However, alongside these successes, a more deceptive form of model error has emerged—\textit{Reasoning Hallucination}—where logically coherent but factually incorrect reasoning traces lead to persuasive yet faulty conclusions. Unlike traditional hallucinations, these errors are embedded within structured reasoning, making them more difficult to detect and potentially more harmful.
In this work, we investigate reasoning hallucinations from a mechanistic perspective. We propose the \textbf{Reasoning Score}, which quantifies the depth of reasoning by measuring the divergence between logits obtained from projecting late layers of LRMs to the vocabulary space, effectively distinguishing shallow pattern-matching from genuine deep reasoning. Using this score, we conduct an in-depth analysis on the ReTruthQA dataset and identify two key reasoning hallucination patterns: early-stage fluctuation in reasoning depth and incorrect backtracking to flawed prior steps. These insights motivate our \textbf{R}easoning \textbf{H}allucination \textbf{D}etection (\textbf{RHD}) framework, which achieves state-of-the-art performance across multiple domains.
To mitigate reasoning hallucinations, we further introduce \textbf{GRPO-R}, an enhanced reinforcement learning algorithm that incorporates step-level deep reasoning rewards via potential-based shaping. Our theoretical analysis establishes stronger generalization guarantees, and experiments demonstrate improved reasoning quality and reduced hallucination rates. 

\end{abstract}

\section{Introduction}
\label{sec:intro}

Hallucination has long been a critical safety challenge for Large Language Models (LLMs). In this context, hallucination refers to outputs that appear fluent and coherent but are semantically inaccurate or lack factual grounding. With the advent of {Large Reasoning Models (LRMs)}—such as DeepSeek-R1~\citep{DeepSeekAI2025} and OpenAI's O-series~\citep{openai2025o3o4}—AI systems have demonstrated unprecedented potential in solving complex real-world tasks. These models are typically trained with outcome-based reinforcement learning (RL) and explicitly generate multi-step reasoning traces prior to final answers.

Recent studies have uncovered a subtler form of hallucination emerging in LRMs~\citep{transluce2024o3, vectara2025deepseek, openai2025o3o4}, which we refer to as \textbf{Reasoning Hallucination}. Unlike traditional hallucinations, reasoning hallucinations are often embedded within logically coherent reasoning traces, making incorrect information more persuasive and harder to detect. This form of ``plausible but incorrect'' reasoning can elicit user trust, resembling the {conjunction fallacy}, where detailed yet misleading explanations are perceived as more credible than simpler ones~\citep{tentori2004conjunction,valmeekam2024llms}.
Prior studies mainly assess the correctness of reasoning paths in standard Chain-of-Thought (CoT) tasks over relatively simple problems~\citep{xu2024can, prasad2023receval}, with limited investigation into the mechanisms of hallucinations in LRMs. Recent work has extended evaluation to long CoT generated by LRMs~\citep{processbench}, yet remains focused on error identification rather than uncovering underlying causes.
However, directly analyzing model-generated traces can be misleading due to the subtle nature of reasoning hallucinations. The emergence of Latent CoT, where reasoning is embedded in hidden states rather than surface text, further obscures detection~\citep{hao2024training}. These challenges call for probing the internal mechanisms behind reasoning hallucinations, enabling interpretable and robust hallucination detection.

Recent studies on the reasoning capabilities of LRMs~\citep{mirzadeh2024gsm, yan2025recitation} have shown that models often produce incorrect answers when their reasoning process relies on shallow pattern-matching rather than genuine deep reasoning. This mirrors findings in cognitive science, where human thinking patterns are closely linked to the emergence of cognitive illusions ~\citep{kahneman2011thinking, bruckmaier2021tversky, weis2024switching}. Inspired by these observations, we investigate reasoning hallucinations in LRMs through the lens of internal thinking patterns, where a central challenge is how to quantify whether a model is performing deep reasoning or merely matching surface-level patterns from training data. Prior mechanistic interpretability studies highlight a functional division within language models: early layers primarily transmit information, while later layers perform more complex reasoning over aggregated context~\citep{nikankin2025arithmetic, chen2025bring}. Based on this insight, we introduce \textbf{Reasoning Score}, which measures the divergence between logits obtained from projecting late layers of LRMs to the vocabulary space. Through synthetic experiments, we validate the effectiveness of the Reasoning Score in measuring the depth of reasoning in LRMs, which reflects whether the model engages in shallow pattern-matching or deep reasoning (\S~\ref{sec:reasoning_score}).

Building on the proposed reasoning score, we conduct extensive analyses on reasoning hallucinations using the ReTruthQA dataset. We identify three key patterns of reasoning hallucination: \textbf{Pattern \#1}: large fluctuations in reasoning depth during the early steps, and \textbf{Pattern \#2}: incorrect backtracking from later steps to earlier incorrect steps. We attribute these patterns to the presence of {shallow pattern-matching} and {overthinking} steps, which undermine the LRM’s inherent abilities in self-verification and backtracking, ultimately leading to reasoning hallucinations (\S~\ref{sec:reason_analysis}). Moreover, we observe that \textbf{Pattern \#3:} overthinking steps exhibit a positive correlation between reasoning scores and perplexity, indicating spurious verification behaviors (\S~\ref{sec:under_pattern1}). 
Based on these findings, we design the \textbf{R}easoning \textbf{H}allucination \textbf{D}etection (\textbf{RHD}) method, which significantly outperforms baselines across diverse domains in the reasoning hallucination detection dataset (\S~\ref{sec:rhd_method}).

We further investigate the underlying cause of shallow pattern-matching and overthinking steps in LRMs and attribute it to the outcome-based RL paradigm commonly used during training. This paradigm incentivizes correct final answers but neglects whether intermediate reasoning steps reflect deep and meaningful thinking. To address this challenge, we introduce a {step-level deep reasoning reward} based on the reasoning score and propose \textbf{GRPO-R}, a variant of Group Relative Policy Optimization (GRPO)~\citep{shao2024deepseekmath,DeepSeekAI2025} that incorporates potential-based reward shaping. GRPO-R encourages deep—but not excessive—reasoning during RL fine-tuning. Our theoretical analysis shows that GRPO-R leads to better generalization in outcome-based RL, and empirical results confirm that it improves reasoning accuracy compared to standard GRPO (\S~\ref{sec:mitigation}).


\section{Related Works}

\textbf{Hallucination of Language Models.}
Hallucination remains a fundamental safety concern for LLMs, and outcome-supervised LRMs~\citep{DeepSeekAI2025,openai2025o3o4} exacerbate this issue by generating logically flawed but persuasive reasoning traces, a consequence of reward-seeking behavior induced by outcome-based RL without step-level supervision~\citep{chen2025reasoning,valmeekam2024llms,transluce2024o3}. Detection approaches span uncertainty estimation~\citep{kadavath2022language,malinin2020uncertainty,ren2022out}, internal signal probing~\citep{cheninside,li2025languagemodeluncertaintyquantification,li2024inference}, process-level critique models~\citep{he2025largelanguagemodelsdetect}, and Process Reward Models (PRMs)~\citep{prmlessons}, though challenges remain due to the deceptive nature of hallucinated traces and the poor generalization of PRM signals~\citep{zheng2024processbenchidentifyingprocesserrors}. 
We address this by conducting a mechanistic analysis of reasoning hallucinations and proposing a detection method grounded in internal model behavior.

\textbf{Mechanistic Interpretability.}
Mechanistic interpretability~\citep{ferrando2024primer, transformercircuits2021} seeks to explain model behavior by attributing predictions to internal components. In transformers, attention heads contextualize token representations~\citep{ferrando2024information, wu2024retrieval}, while FFNs act as knowledge storage~\citep{geva2021transformer}. Recent work has applied intervention-based techniques from mechanistic interpretability to analyze how LLMs perform reasoning, revealing a functional division of labor across layers in various tasks such as math reasoning and multimodal reasoning: early layers primarily transmit contextual information, and the reasoning process is predominantly carried out by the later layers.~\citep{chen2025bring, nikankin2025arithmetic, li2024understanding}.  These insights motivate our design of the \textit{Reasoning Score}, which captures thinking patterns by quantifying hidden state shifts in later layers, laying the groundwork for analyzing reasoning hallucinations in LRMs.

\section{Empirical Study of Reasoning Hallucination}
\label{sec:emp_study}
Our empirical study investigates the relationship between reasoning hallucinations and the thinking patterns of LRMs, where thinking patterns are quantified using a reasoning score derived from mechanistic interpretability. This analysis reveals key reasoning hallucination patterns and guides the design of more effective detection and mitigation strategies.


\begin{figure}[t]
    \centering
    \includegraphics[width=0.98\textwidth]{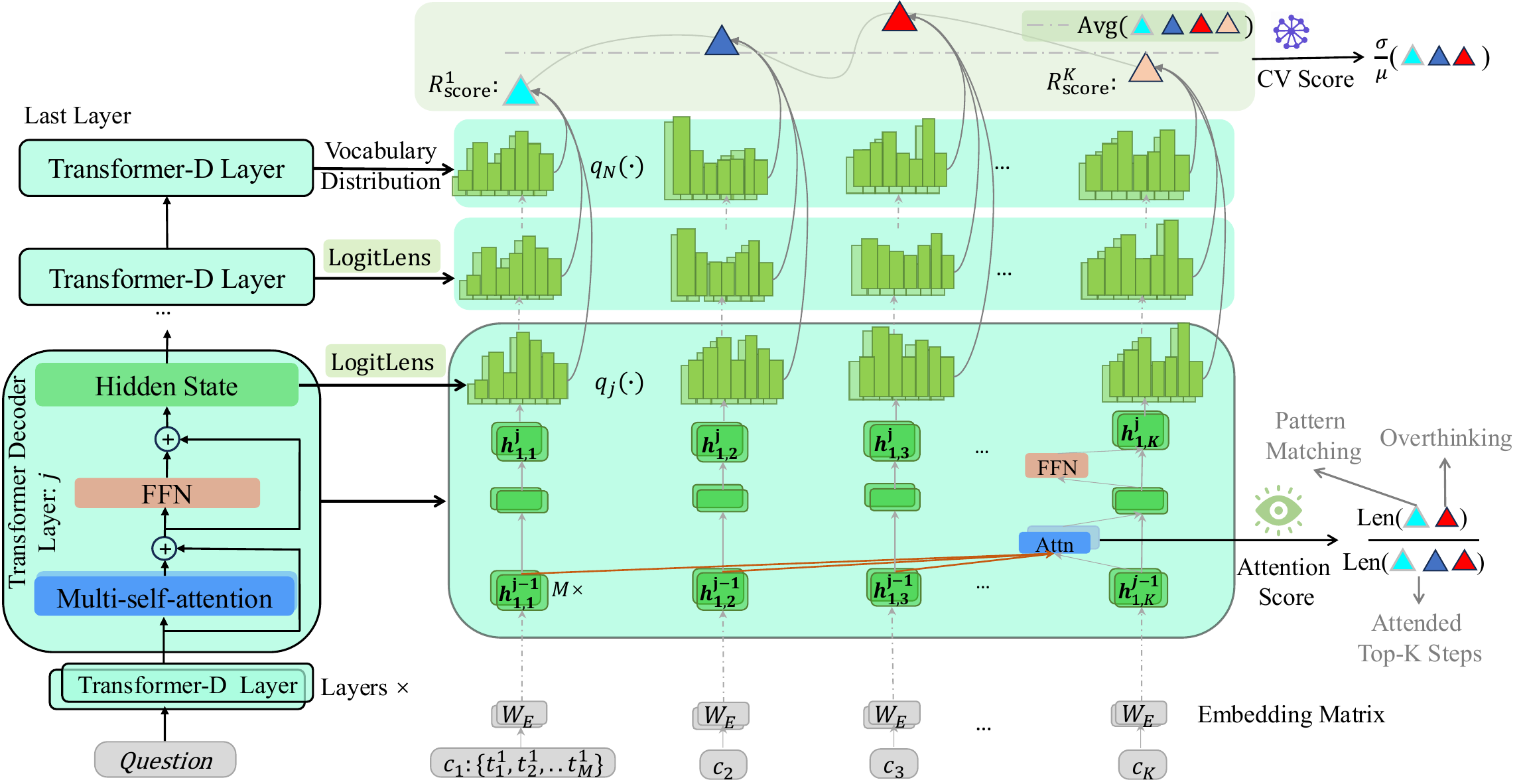}
    \caption{The illustration of the calculation processes for the Reasoning Score (Eq.~\ref{eq:r_score}), CV Score (Eq.~\ref{eq:cv_score}), and Attention Score (Eq.~\ref{eq:attn_score}).}
    \label{fig:method}
\end{figure}

\subsection{Reasoning Score: Measuring Reasoning Depth in Large Reasoning Model}
\label{sec:reasoning_score}
To determine whether a reasoning step is generated via shallow pattern matching or genuine deep reasoning, we propose a \textit{Reasoning Score} inspired by mechanistic interpretability. Prior studies analyzing the internal mechanisms of language models reveal a layered functional division: early layers primarily transmit information, while later layers perform more complex reasoning over aggregated context to produce correct outputs~\citep{stolfo2023a, nikankin2025arithmetic, li2024understanding}. Building on this insight, we define the reasoning score under the hypothesis that deeper reasoning is reflected by meaningful transformations in later-layer representations during generation.

Formally, a LRM-generated reasoning trace \( C = [c_1, c_2, \dots, c_K] \) consists of multiple reasoning steps, each associated with a step-level reasoning score \( R^{k}_{\text{score}} \) that quantifies the depth of reasoning in step \( c_k \). Each reasoning step \(c_k = \langle t^{k}_1, \ldots, t^{k}_M \rangle\) is composed of \(M\) tokens. The overall reasoning trace score $\mathcal{R}_{\text{score}}$ is represented as a sequence \( [R^{1}_{\text{score}}, R^{2}_{\text{score}}, \dots, R^{K}_{\text{score}}] \), capturing the model's reasoning dynamics across steps. As shown in \autoref{fig:method}, each score is defined as the mean Jensen–Shannon divergence (JSD) between vocabulary distributions induced by hidden states from selected later layers and the anchor distribution from the final layer.
To obtain the output distribution from each token hidden state \( h_{m, k}^{(j)} \) of token $t_{m}^{k}$ at layer $j$, we apply the $\operatorname{LogitLens}$~\citep{nostalgebraist2020}, which projects each layer-normalized hidden state into vocabulary space via the unembedding matrix \( \boldsymbol{W}_U \): 
$
\operatorname{LogitLens}(h_{m,k}^{(j)}) = \operatorname{LayerNorm}(h_{m,k}^{(j)}) \boldsymbol{W}_U.
$
This provides a layer-wise interpretation of token prediction behavior and has been widely adopted for interpreting LLM internal representations~\citep{hanna2024does,zhou2024unibias,yu2023characterizing}.

The final step-level Reasoning Score $R^{k}_{\text{score}}$ is computed as:
\begin{align} 
R^{k}_{\text{score}} &= \frac{1}{|c_k|} \sum_{t_{m+1}^{k} \in c_k} \frac{1}{|\mathcal{J}|} \sum_{j \in \mathcal{J}} \text{JSD}\left(q_N\left(t_{m+1}^{k}\right), q_j\left(t_{m+1}^{k}\right)\right), \\
q_j\left(t_{m+1}^{k}\right) &= \operatorname{softmax}\left(\operatorname{LogitLens}\left(h_{m,k}^{(j)}\right)\right),
\quad j \in \mathcal{J},
\label{eq:r_score}
\end{align}
where \( \mathcal{J} \) denotes the set of selected later layers and \( q_N \) is the anchor distribution from the final layer.

Intuitively, a larger score \( R_{\text{score}} \) indicates substantial transformation in output distributions within late layers, suggesting the model is actively engaging in deep reasoning by integrating earlier contextual information. In contrast, a smaller score implies distributional stability in late layers, indicating shallow pattern matching or heuristic-based processing without further reasoning, consistent with prior findings on the differential roles of early versus later layers.

\begin{figure}[t]
    \centering
    \includegraphics[width=0.95\textwidth]{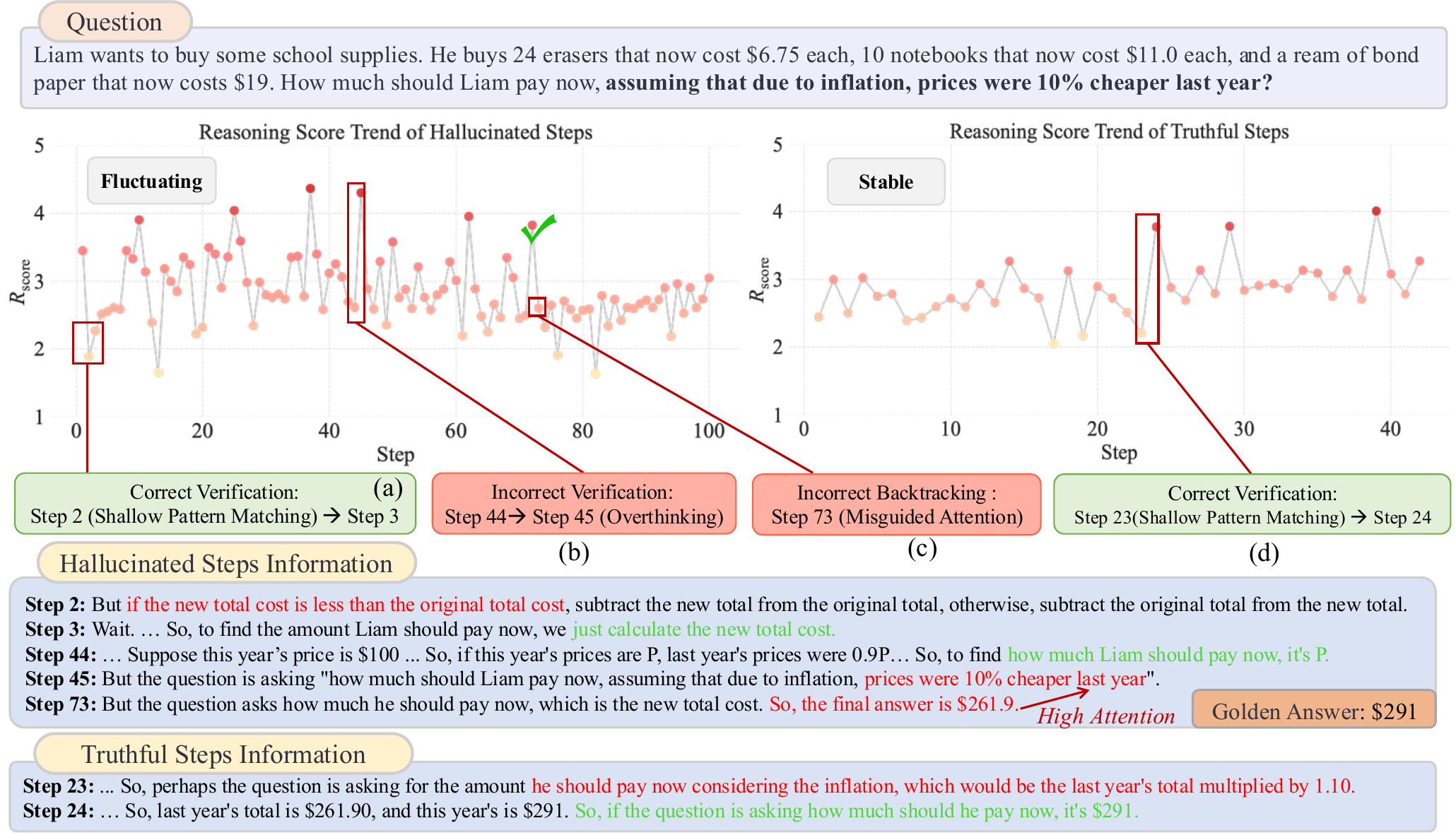}
    \caption{Case study from GSM-NoOp dataset~\cite{mirzadeh2024gsm} on R1-7B. We sample both a hallucinated reasoning trace (left) and a truthful reasoning trace (right) for the same question as a preliminary analysis of reasoning hallucinations. Reasoning scores are scaled by $1\mathrm{e}{5}$.}
    \label{fig:case_study}
\end{figure}

\paragraph{Validating the Reasoning Score with GSM-NoOp.}

We validate whether the Reasoning Score faithfully reflects reasoning depth using GSM-NoOp~\citep{mirzadeh2024gsm}, a GSM8K-derived dataset where semantically irrelevant but plausible \texttt{No-Op} phrases are injected into problems. Although these phrases do not alter the correct reasoning path, prior work shows that LRMs are often misled by them, revealing their reliance on shallow pattern matching~\citep{mirzadeh2024gsm}.
This makes GSM-NoOp a suitable testbed: if the Reasoning Score captures reasoning depth, then steps misled by No-Op phrases should yield lower scores. We validate this using correct outputs from \texttt{DeepSeek-R1-Distill-Qwen-7B (R1-7B)} to avoid confounds from hallucinated traces. Misled steps are labeled via GPT-4o. As GSM-NoOp is not publicly available, we re-implement a compatible version following the original paper’s methodology, with prompts and details provided in Appendix~\ref{app:GSM-NoOp}.

\paragraph{Results.}
Our empirical results in \autoref{fig:emp_1_result} (a) show that reasoning steps misled by No-Op phrases consistently receive significantly lower Reasoning Scores compared to non-misled steps. This supports our hypothesis that the Reasoning Score effectively captures shallow pattern-matching behavior and serves as an indicator of whether a model is engaging in deep reasoning.

\subsection{Reasoning Hallucination Analysis Based on Reasoning Score}
\label{sec:reason_analysis}
In this section, we leverage the mechanistically derived Reasoning Score as a proxy for the thinking patterns of LRMs and investigate its relationship with the emergence of reasoning hallucinations. We begin with a preliminary analysis to identify characteristic patterns associated with hallucinated reasoning traces. We then analyze the generality of these patterns across domains using the ReTruthQA dataset, and further examine the underlying mechanism that leads LRMs to exhibit such behaviors.

\subsubsection{Case Analysis on GSM-NoOps}
\label{sec:case_study}
In this section, we conduct a preliminary analysis using the LRM R1-7B on a question from GSM-NoOp~\citep{mirzadeh2024gsm}, where a ``NoOp'' statement is appended to the end of a math problem. To enable controlled comparison of reasoning hallucination patterns, we sample both a truthful and a hallucinated response from R1-7B on the same question. Figure~\ref{fig:case_study} presents the question along with step-level reasoning scores \( R_{\text{score}} \), which quantify the depth of thinking at each step.

We observe that when the model generates reasoning steps that attend to the added NoOp content, these steps typically receive lower \( R_{\text{score}} \), which in turn triggers the model’s \textit{Self-Verification} mechanism~\citep{li2025llms}, producing later steps with higher \( R_{\text{score}} \) that attempt to correct the earlier deviation (e.g., (a) and (d) in \autoref{fig:case_study}). However, in the hallucinated reasoning trace, we also observe \textit{overthinking} phenomena—steps with excessively high \( R_{\text{score}} \) that incorrectly revise the previous correct reasoning steps (e.g., (b) in \autoref{fig:case_study}). These hallucinated traces contain more shallow pattern-matching and overthinking steps, resulting in an overall unstable reasoning trajectory.
From this case study, we identify the reasoning hallucination \textbf{Pattern \#1}: hallucinated traces typically exhibit large fluctuations in reasoning score, especially during the early steps of the process. 

Furthermore, we observe that even when the model briefly arrives at correct intermediate steps, it often fails to maintain this correctness. In later steps, it performs \textit{Incorrect Backtracking}, attending to earlier shallow or overthinking steps, ultimately leading to hallucination (e.g., (c) in \autoref{fig:case_study}). This motivates the reasoning hallucination \textbf{Pattern \#2}: in the later stages of reasoning, the model tends to misguidedly attend to earlier hallucinated steps, either shallow or overthinking, making it difficult to correct earlier errors and leading to hallucinated reasoning. 



\begin{figure}[t]
    \centering
    \begin{subfigure}{0.33\textwidth}
        \includegraphics[width=\linewidth]
        {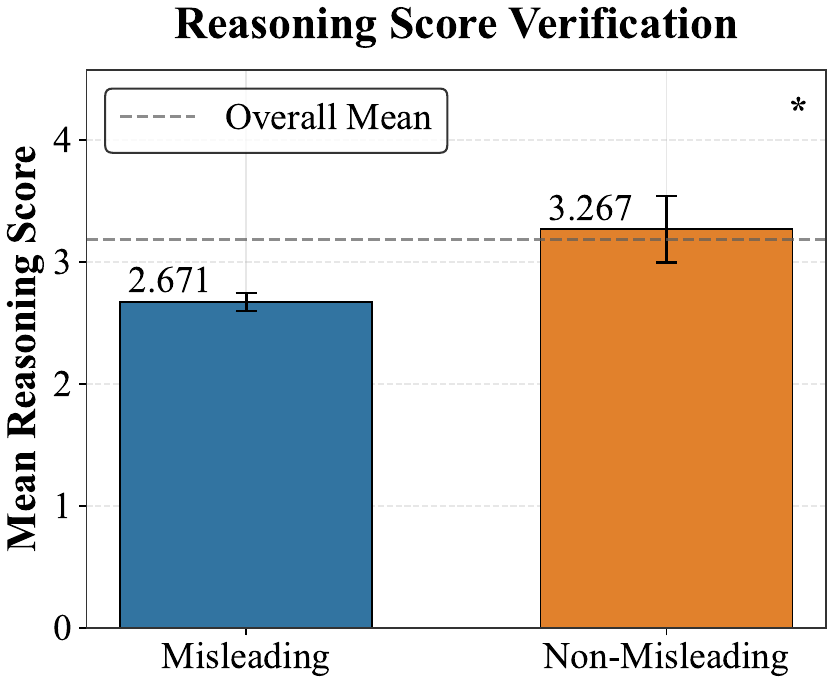}
        \caption*{(a)}
    \end{subfigure}
    \hfill
    \begin{subfigure}{0.31\textwidth}
        \includegraphics[width=\linewidth]{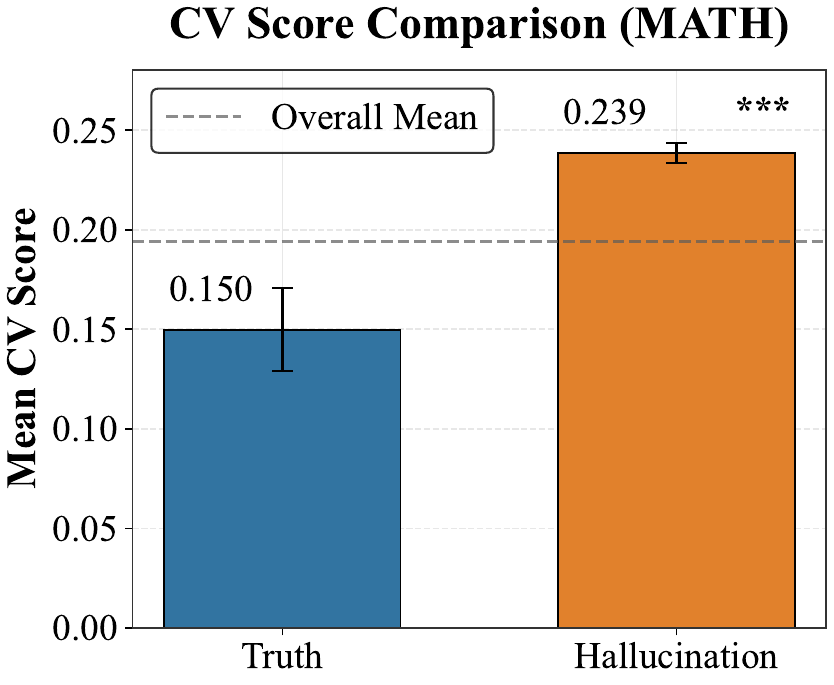}
        \caption*{(b)}
    \end{subfigure}
    \hfill
    \begin{subfigure}{0.32\textwidth}
        \includegraphics[width=\linewidth]{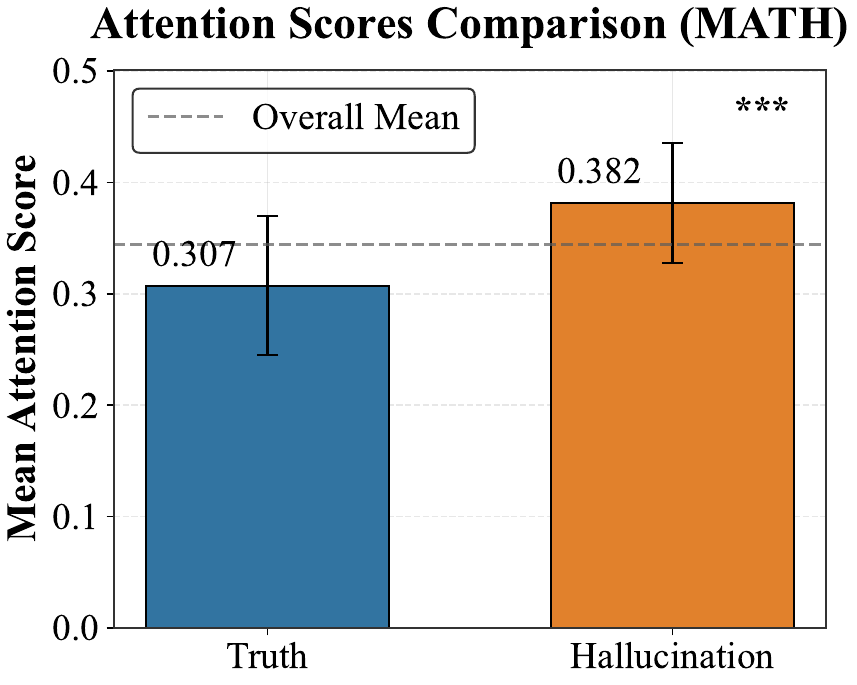}
        \caption*{(c)}
    \end{subfigure}
    \caption{(a) Reasoning Score validation on GSM-NoOp. (b) Evaluation of Pattern~\#1 (early fluctuations), and (c) Pattern~\#2 (misguidedly attention) on ReTruthQA. Asterisks indicate statistical significance based on a t-test: * for $p$-value < 0.05, and *** for $p$-value < 0.001.}
    \label{fig:emp_1_result}
\end{figure}

\subsubsection{Reasoning Hallucination Pattern Analysis}

In this section, we validate the two reasoning hallucination patterns identified in preliminary analysis(\S~\ref{sec:case_study}): \textbf{Pattern \#1}: large fluctuations in reasoning scores during early steps, and \textbf{Pattern \#2}: incorrect backtracking to earlier hallucinated reasoning steps in later stages. We aim to assess whether these patterns generalize across broader domains and tasks. To this end, we conduct experiments on the ReTruthQA dataset using the R1-7B model. ReTruthQA covers three reasoning domains: \texttt{Math}, \texttt{Science}, and \texttt{MultiHopQA} (Details in \S~\ref{sec:rhd_dataset}). For each domain, we construct two balanced subsets using gold hallucination labels: one with hallucinated traces and one with truthful traces.

To evaluate \textbf{Pattern \#1}, we measure the fluctuation of reasoning depth in the early phase of reasoning using the Coefficient of Variation (\textbf{CV Score})~\citep{everitt1998cambridge}, a standard metric for quantifying sequence variability (shown in~\autoref{fig:method}). Specifically, we focus on the first \( \lceil K / r \rceil \) steps of the reasoning trace \( \mathcal{C} = \langle c_1, c_2, \dots, c_K \rangle \), and define:
$
\mathcal{R}_{\text{score}}^{\text{early}} = \left[ R^1_{\text{score}}, R^2_{\text{score}}, \dots, R^{\lceil K/r \rceil}_{\text{score}} \right],
$
where \( r > 1 \) is a constant controlling the size of the early-step window. The CV score over early reasoning steps is then given by:
\begin{equation}
\text{CV}(\mathcal{C}) = \frac{\sigma(\mathcal{R}_{\text{score}}^{\text{early}})}{\mu(\mathcal{R}_{\text{score}}^{\text{early}})},
\label{eq:cv_score}
\end{equation}
where \( \mu(\cdot) \) and \( \sigma(\cdot) \) denote the mean and standard deviation, respectively.

To assess \textbf{Pattern \#2}, we introduce a \textbf{Attention Score} that quantifies the extent to which later reasoning steps attend to earlier shallow-pattern matching or overthinking steps (\autoref{fig:method}). Let the full reasoning trace be \( \mathcal{C} = \langle c_1, c_2, \dots, c_K \rangle \), and define the later reasoning steps as \( \mathcal{C}_{\text{later}} = \{c_k\}_{k=\lceil \eta K \rceil}^{K} \).
For a step \( c_k \in \mathcal{C}_{\text{later}} \), we compute the mean attention from \( c_k \) to each earlier step \( c_j \) as:
\[
\bar{a}_{k \to j} = \frac{1}{|c_k||c_j|} \sum_{t \in c_k} \sum_{s \in c_j} \left( \frac{1}{|\mathcal{L}|} \sum_{l \in \mathcal{L}} \frac{1}{H} \sum_{h=1}^H a_{t,s}^{l,h} \right),
\]
where \( a_{t,s}^{l,h} \) denotes the attention weight from token \( t \) to token \( s \) at head \( h \) in layer \( l \), \( H \) is the number of heads per layer, \( \mathcal{L} \) is the set of selected layers for aggregation, and the constant $\eta$ defines late steps.

We then identify the top-$K$ most attended earlier steps based on \( \bar{a}_{k \to j} \):
$\mathcal{T}_k = \text{TopK}\left( \{ \bar{a}_{k \to j} \}_{j=1}^{k-1}, K \right),$
where \( \mathcal{T}_k \) is the set of indices corresponding to the top-attended steps. The step-level attention score for \( c_k \) is then defined as the proportion of these steps whose Reasoning Scores fall outside the normal range, either in the lower quartile or exceeding a high threshold $\tau$:
\[
\text{AttnScore}(c_k) = \frac{1}{K} \sum_{j \in \mathcal{T}_k} \mathds{1}_{\left( R^{j}_{\text{score}} \le \text{Quantile}_{1/4}(\mathcal{R}_{\text{score}}) \;\; \text{or} \;\; R^{j}_{\text{score}} \ge \tau \right)},
\]
where \( \mathds{1}_{(\cdot)} \) is the indicator function, \( \text{Quantile}_{1/4}(\mathcal{R}_{\text{score}}) \) denotes the first quartile of the reasoning scores (i.e., potentially shallow pattern-matching steps), and \( \tau \) is a threshold identifying potentially overthinking steps.

The trace-level attention score is computed by averaging over all later steps:
\begin{equation}
    \text{AttnScore}(\mathcal{C}) = \frac{1}{|\mathcal{C}_{\text{later}}|} \sum_{c_k \in \mathcal{C}_{\text{later}}} \text{AttnScore}(c_k),
    \label{eq:attn_score}
\end{equation}
which reflects the extent to which later reasoning steps attend to earlier incorrect steps.

\paragraph{Results.}
As shown in \autoref{fig:emp_1_result}(b) and (c) and Appendix~\ref{app:more_result_reason_hall}, across all three domains, hallucinated reasoning traces consistently yield significantly higher CV scores and Attention scores than truthful traces. This confirms that hallucinated traces are more fluctuating in reasoning depth (Pattern \#1) and more likely to attend prior incorrect steps (Pattern \#2), demonstrating the generalizability of both patterns beyond the initial case study (Section~\ref{sec:case_study}). Detailed settings are shown in Appendix~\ref{app:more_result_reason_hall}.

\begin{figure}[t]
    \centering
    \begin{subfigure}{0.28\textwidth}
        \includegraphics[width=\linewidth]{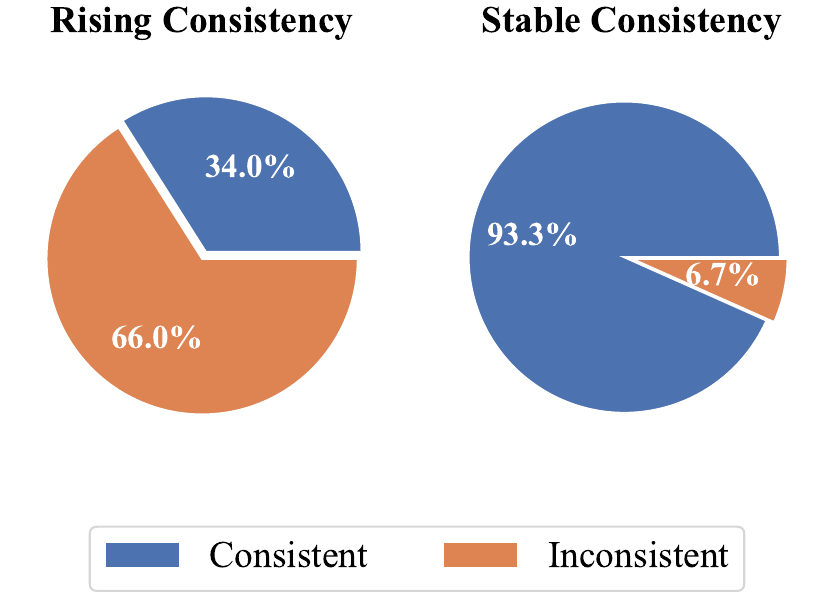}
        \caption*{(a)}
    \end{subfigure}
    \hfill
        \begin{subfigure}{0.2\textwidth}
        \includegraphics[width=\linewidth]{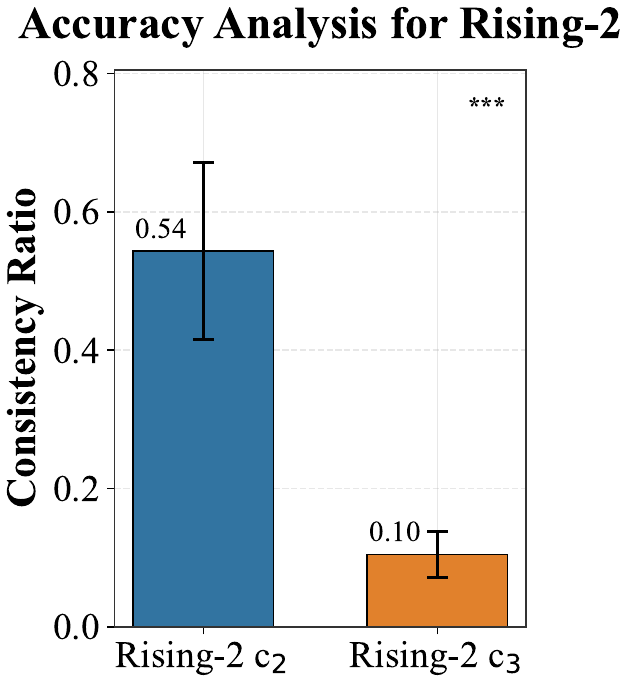}
        \caption*{(b)}
    \end{subfigure}
        \hfill
    \begin{subfigure}{0.28\textwidth}
        \includegraphics[width=\linewidth]{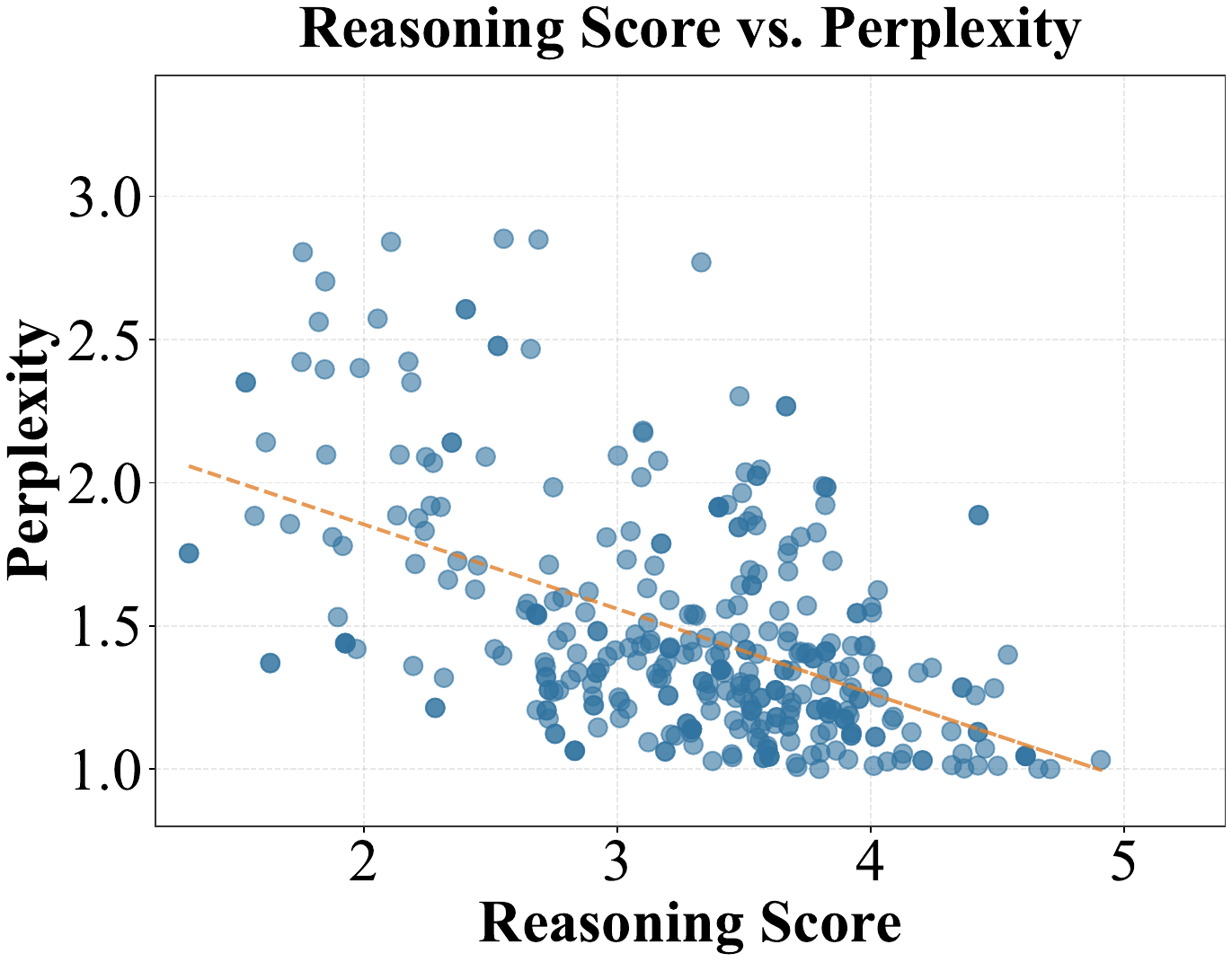}
        \caption*{(c)}
    \end{subfigure}
    \hfill
    \begin{subfigure}{0.18\textwidth}
        \includegraphics[width=\linewidth]{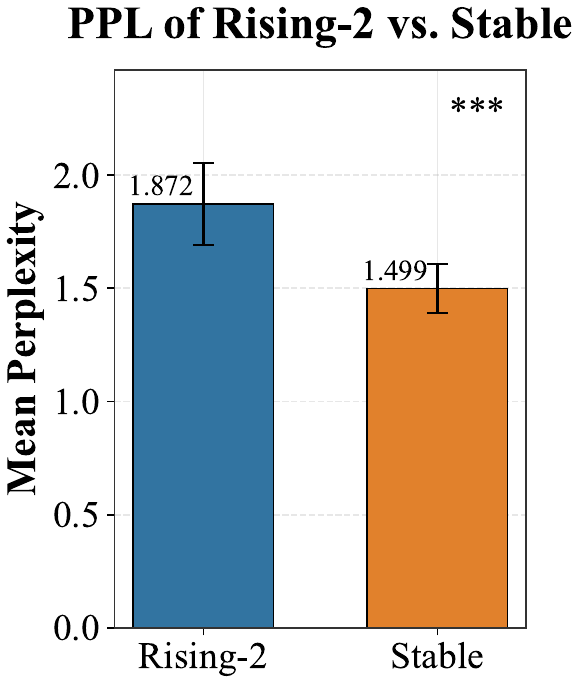}
        \caption*{(d)}
    \end{subfigure}
\caption{Analysis of Pattern~\#1: (a) Consistency Analysis (\textbf{Q1}); (b) Accuracy Comparison in Rising-2 triples (\textbf{Q2}); (c)  Reasoning score vs. perplexity and (d) \mbox{Perplexity of Rising-2 vs. Stable (\textbf{Q3}).}}
    \label{fig:emp_2}
\end{figure}


\subsection{Analyzing the Mechanisms Behind Reasoning Fluctuation}
\label{sec:under_pattern1}
We investigate the underlying mechanism behind \textbf{Pattern~\#1}, where hallucinated reasoning traces exhibit large fluctuations in reasoning depth. Building on our case study in Section~\ref{sec:case_study}, we hypothesize this stems from a built-in self-verification mechanism. Key questions still include: \textbf{Q1}: What triggers verification behavior in LRMs? \textbf{Q2}: Do excessively high reasoning scores reliably signal overthinking? \textbf{Q3}: If Q2 holds, what factors lead to the emergence of such overthinking steps?

To explore these, we construct step triples \((c_1, c_2, c_3)\) from reasoning traces: (1) \textbf{Stable} triples with minimal score variation from truthful traces; (2) \textbf{Rising-1} triples from hallucinated traces with a moderate score spike (\(R_{\text{score}}(c_3)<4\)), potentially triggered by shallow pattern-matching in $c_2$; and (3) \textbf{Rising-2} triples with extreme score spikes (\(R_{\text{score}}(c_3)>4\)), to probe overthinking behaviors.

\paragraph{Analysis.}
For \textbf{Q1}, we compare the logical consistency between \( c_1 \) and \( c_2 \) in Rising vs. Stable triples using GPT-4o judgments. As shown in \autoref{fig:emp_2}(a), stable triples show significantly higher consistency, suggesting that verification is more likely to be triggered when earlier steps are inconsistent.

Regarding \textbf{Q2}, we assess the accuracy of \( c_2 \) and \( c_3 \) in Rising-2 triples. \autoref{fig:emp_2}(b) shows that while \( c_2 \) is often correct, \( c_3 \) introduces errors, confirming that excessively high reasoning scores reliably signal overthinking. Prompts of \textbf{Q1} and \textbf{Q2} are shown in Appendix~\ref{app:prompt_hall_pattern}. 

To investigate \textbf{Q3}, we firstly analyze the correlation between reasoning depth and perplexity. As shown in \autoref{fig:emp_2}(c), reasoning steps with higher \( R_{\text{score}} \) generally exhibit lower perplexity, indicating more certainty outputs. However, \autoref{fig:emp_2}(d) reveals that in Rising-2 triples, \( c_3 \) steps, despite higher reasoning scores, have higher perplexity than those in stable triples, suggesting that overthinking may produce internally unstable generations.
We term this phenomenon \textit{spurious verification}, where the model performs misguided validation driven by outcome-based reward optimization.  This insight leads us to identify a new hallucination pattern:
\textbf{Pattern \#3:} Overthinking steps exhibit a positive correlation between \( R_{\text{score}} \) and perplexity. More details are provided in Appendix~\ref{app:detail_pattern_analysis}.

\section{Methods}
\label{sec:method}
\subsection{Reasoning Hallucination Detection}
\label{sec:rhd_method}
Building upon the patterns uncovered in our empirical study, we propose the \textbf{R}easoning \textbf{H}allucination \textbf{D}etection algorithm~(\textbf{RHD}). Our approach leverages the step-level Reasoning Score \( R_{\text{score}} \) to quantify thinking depth throughout the reasoning trace, and incorporates three identified indicators of hallucination: (1) Pattern~\#1: large fluctuations in reasoning scores during early steps, (2) Pattern~\#2: incorrect backtracking to earlier shallow or overthinking steps in later stages, and (3) Pattern~\#3: overthinking behavior where \( R_{\text{score}} \) and perplexity exhibit a positive correlation.

Given a question \( Q \) and its reasoning trace \( \mathcal{C} \) with step-level scores \( \mathcal{R}_{\text{score}} \), we define the overall Reasoning Hallucination Score as:
\begin{equation}
\mathcal{H}_{\mathcal{C}} = 
\underbrace{\alpha_1 \cdot \operatorname{Avg}(\mathcal{R}_{\text{score}})}_{\text{Overall Reasoning Depth}} 
+ \underbrace{\alpha_2 \cdot \operatorname{CV}(\mathcal{C})}_{\text{Pattern \#1}} 
+ \underbrace{\alpha_3 \cdot \operatorname{AttnScore}(\mathcal{C})}_{\text{Pattern \#2}} 
+ \underbrace{\alpha_4 \cdot \operatorname{PCC}(\mathcal{R}_{\text{score}}, \operatorname{PPL}(\mathcal{C}))}_{\text{Pattern \#3}},
\label{eq:hallucination_score}
\end{equation}
where \( \alpha_1, \alpha_2, \alpha_3, \alpha_4 \ge 0 \) are regression coefficients. \( \operatorname{Avg} \) denotes the average reasoning score, \( \operatorname{CV} \) (Eq.~\ref{eq:cv_score}) measures fluctuations during early-steps, \( \operatorname{AttnScore} \) (Eq.~\ref{eq:attn_score}) captures attention on earlier hallucinated steps, and \( \operatorname{PCC} \) refers to the Pearson correlation coefficient between reasoning scores and step-level perplexity \( \operatorname{PPL}(\mathcal{C}) \), computed according to Eq.~\ref{eq:ppl_trace}.

\subsection{Mitigating Reasoning Hallucinations via Step-Level Reasoning Score Shaping}
\label{sec:mitigation}

Reasoning hallucinations often stem from two types of flawed steps: (1) shallow pattern-matching, reflecting shortcut behaviors, and (2) overthinking, induced by excessive and misguided verification. A core factor is outcome-based RL, which only rewards the final answer and neglects intermediate steps~\citep{chen2025reasoning,valmeekam2024llms,transluce2024o3}, encouraging reward-hacking heuristics that may propagate through distillation~\citep{wang2025larger}.

To address this, we introduce an auxiliary process-level reward based on the \textbf{reasoning score} \( R_{\text{score}} \) from Section~\ref{sec:reasoning_score}, which measures the reasoning depth at each step. This encourages meaningful reasoning while penalizing shallow or overthinking steps.
We model the reasoning process as a finite-horizon MDP \( (\mathcal{S}, \mathcal{A}, P, r, \gamma) \), where \( s_t \in \mathcal{S} \) is the reasoning state at step \( t \), \( a_t \in \mathcal{A} \) denotes the next reasoning step, $P$ is the
transition probability and \( r_t \) is the reward:
\[
r_t =
\begin{cases}
0, & t < T, \\
R_{\rm final}, & t = T.
\end{cases}
\]

\paragraph{Reward Shaping with Reasoning Score.}
We apply potential-based reward shaping~\citep{ng1999policy}:
\[
\bar{r}_t = r_t + \gamma \Phi(s_{t+1}) - \Phi(s_t),
\quad \text{with } \Phi(s_T) = 0,
\]
which preserves the optimal policy while redistributing credit:
$
V'(s_t) = V(s_t) - \Phi(s_t),
$ where \( V(s_t) = \mathbb{E}_\pi \left[ \sum_{k=t}^T \gamma^{k-t} r_k \,\middle|\, s_t \right] \) is the value function of original reward and \( V'(s_t) \) is the shaped.

\paragraph{Potential Function Design.}
To avoid encouraging overthinking, we clip the reasoning score:
\[
\tilde R_{\rm score}(s_t) =
\begin{cases}
\alpha \cdot R_{\rm score}(s_t), & R_{\rm score}(s_t) \leq \tau, \\
0, & \text{otherwise},
\end{cases}
\quad
\Phi(s_t) = -\tilde R_{\rm score}(s_t),
\]
where \( \alpha > 0 \) and \( \tau \) control the weighting strength and the threshold for overthinking, respectively.

To understand the generalization benefit of our proposed reasoning score–based shaping, we derive a uniform convergence bound under augmented rewards:

\begin{theorem}[Generalization Gap with Augmented Rewards]
\label{thm:LRM:gb}
Let the policy class $\Pi$ be such that for any $\pi \in \Pi$, the augmented return $R(\pi,\xi) = \sum_{t=1}^{T} \gamma^{t-1} \bar{r}_t(\xi)$
is uniformly bounded in $[0, \bar{R}_{\max}]$ for any trajectory $\xi$ sampled from the environment.
Each trajectory $\xi = (s_1, a_1, \bar{r}_1, \dots, s_T, a_T, \bar{r}_T)$ denotes a complete multi-step reasoning trace.
Suppose that $\Pi$ has Rademacher complexity $\mathcal{R}_n(\Pi)$ based on $n$ independent training samples $\{\xi_i\}_{i=1}^n$. Then, with probability at least $1-\delta$, for any $\pi \in \Pi$ the following holds:
\[
  J_{\text{test}}(\pi) - J_{\text{train}}(\pi) \leq 2 \bar{R}_{\max}\, \mathcal{R}_n(\Pi) + \bar{R}_{\max} \sqrt{\frac{\log(1/\delta)}{2n}},
\]
where
$J_{\text{test}}(\pi)=\mathbb{E}_{\xi}[R(\pi,\xi)]$
is the expected test return and
$J_{\text{train}}(\pi)=\frac{1}{n}\sum_{i=1}^{n} R(\pi,\xi_i)$
is the empirical training return. 
\end{theorem}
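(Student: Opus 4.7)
The statement is a classical uniform convergence bound, where the ``loss'' is the negative augmented return $R(\pi,\xi)$ and the function class is $\mathcal{F}=\{R(\pi,\cdot):\pi\in\Pi\}$, which by assumption takes values in $[0,\bar R_{\max}]$. My plan is therefore to follow the standard McDiarmid $+$ symmetrization template, with the only bookkeeping being the factor of $\bar R_{\max}$ coming from the boundedness of $R(\pi,\xi)$.

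First I would define
\[
  \Phi(\xi_1,\dots,\xi_n) \;=\; \sup_{\pi\in\Pi}\bigl[J_{\text{test}}(\pi) - J_{\text{train}}(\pi)\bigr],
\]
and check the bounded-differences property: replacing a single trajectory $\xi_i$ by $\xi_i'$ changes $J_{\text{train}}(\pi)$ by at most $\bar R_{\max}/n$ (because $R(\pi,\cdot)\in[0,\bar R_{\max}]$), hence it changes $\Phi$ by at most $\bar R_{\max}/n$ uniformly in $\pi$. Applying McDiarmid's inequality to $\Phi$ with $n$ independent coordinates gives, with probability at least $1-\delta$,
\[
  \Phi \;\le\; \mathbb{E}[\Phi] + \bar R_{\max}\sqrt{\tfrac{\log(1/\delta)}{2n}},
\]
which already produces the second term in the bound.

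Next I would bound $\mathbb{E}[\Phi]$ by the standard ghost-sample symmetrization. Introducing an independent copy $\{\xi_i'\}_{i=1}^n$ and i.i.d.\ Rademacher signs $\{\epsilon_i\}$,
\[
  \mathbb{E}[\Phi] \;\le\; \mathbb{E}\!\left[\sup_{\pi\in\Pi} \tfrac{1}{n}\sum_{i=1}^n \bigl(R(\pi,\xi_i') - R(\pi,\xi_i)\bigr)\right] \;=\; 2\,\mathbb{E}\!\left[\sup_{\pi\in\Pi}\tfrac{1}{n}\sum_{i=1}^n \epsilon_i R(\pi,\xi_i)\right].
\]
Under the convention that $\mathcal{R}_n(\Pi)$ denotes the Rademacher complexity of the normalized policy return class $\{R(\pi,\cdot)/\bar R_{\max}:\pi\in\Pi\}$ (equivalently, factoring out the scale $\bar R_{\max}$ from the unnormalized Rademacher complexity), the right-hand side equals $2\bar R_{\max}\mathcal{R}_n(\Pi)$. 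Combining this with the McDiarmid step yields the claimed bound uniformly in $\pi\in\Pi$ (any fixed $\pi$ is dominated by the supremum).

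I do not anticipate a real obstacle here since the argument is textbook. The only subtlety worth being explicit about is that potential-based shaping is used only to justify the bounded-range assumption $R(\pi,\xi)\in[0,\bar R_{\max}]$ (so the concentration and symmetrization constants are finite and of size $\bar R_{\max}$); the generalization argument itself never uses the specific form of $\bar r_t = r_t+\gamma\Phi(s_{t+1})-\Phi(s_t)$. If I wanted a cleaner exposition I would first rescale $f_\pi(\xi):=R(\pi,\xi)/\bar R_{\max}\in[0,1]$, prove the $[0,1]$ version of the bound with the usual constants, and then multiply through by $\bar R_{\max}$ to recover the displayed inequality, which is the only place where the scale factor appears.
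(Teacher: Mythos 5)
Your proposal is correct and takes essentially the same route as the paper: McDiarmid's bounded-differences inequality to concentrate the supremum of the generalization gap, ghost-sample symmetrization to bound its expectation by twice the Rademacher complexity, and the same rescaling convention that interprets $\mathcal{R}_n(\Pi)$ as the Rademacher complexity of the normalized return class so that the factor $\bar{R}_{\max}$ appears explicitly. The only cosmetic difference is that the paper also states an intermediate pointwise Hoeffding bound for a fixed $\pi$ before the McDiarmid step, which is redundant and which you correctly omit; your remark that the potential-based form of $\bar r_t$ is used only to justify boundedness matches the paper's actual use of the assumption.
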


The proof is given in Appendix~\ref{app:proof_gen}. Intuitively, our reasoning score acts as a regularizer that encourages logically consistent behaviors and effectively reduces the Rademacher complexity $\mathcal{R}_n(\Pi)$, thereby tightening the bound and improving generalization to unseen reasoning tasks.

\paragraph{Integrate into GRPO.}
To demonstrate compatibility with standard RL algorithms, we integrate the reasoning score shaping framework into the Group Relative Policy Optimization (GRPO), a scalable and widely used RL algorithm for reasoning model training~\cite{DeepSeekAI2025,shao2024deepseekmath}, yielding \textbf{GRPO-R}. All implementation and formulation details of GRPO-R are provided in Appendix~\ref{app:grpo_r_details}.

\begin{table}[t]
\centering
\resizebox{0.99\linewidth}{!}{
\renewcommand\arraystretch{1.15}
\setlength{\tabcolsep}{1.5mm}
\begin{tabular}{lllccccccccccccccc}
\toprule
\multirow{2}{*}{\textbf{LRMs}} & \multirow{2}{*}{\textbf{Categories}} & \multirow{2}{*}{\textbf{Methods}} & \multicolumn{5}{c}{\textbf{ReTruthQA (MATH)}} & \multicolumn{5}{c}{\textbf{ReTruthQA (Science)}} & \multicolumn{5}{c}{\textbf{ReTruthQA (MultiHopQA)}} \\
\cmidrule(lr){4-8} \cmidrule(lr){9-13} \cmidrule(lr){14-18}
~ & ~ & ~ & \textbf{AUC} & \multicolumn{1}{c|}{\textbf{PCC}} & \textbf{MC1} & \textbf{MC2} & \textbf{MC3} & \textbf{AUC} & \multicolumn{1}{c|}{\textbf{PCC}} & \textbf{MC1} & \textbf{MC2} & \textbf{MC3} & \textbf{AUC} & \multicolumn{1}{c|}{\textbf{PCC}} & \textbf{MC1} & \textbf{MC2} & \textbf{MC3} \\
\midrule
\multirow{14}{*}{\textbf{R1-7B}} & \multirow{3}{*}{\bf Ensemble} & ChainPoll~\cite{friel2023chainpoll} & 0.6384 & 0.2603 & 0.3020 & 0.2952 & 0.3583 & 0.6468 & 0.2612 & 0.2700 & 0.2580 & 0.3098 & 0.6297 & 0.2233 & 0.4208 & 0.3019 & 0.3954 \\
~ & ~ & LMvLM~\cite{cohen2023lm} & 0.6364 & 0.3728 & 0.3204 & 0.2504 & 0.3402 & 0.5345 & 0.1890 & 0.2600 & 0.2100 & 0.3113 & 0.6331 & 0.2759 & 0.3649 & 0.3049 & 0.3984 \\
~ & ~ & SelfCheckGPT~\cite{manakul2023selfcheckgpt} & 0.7727 & \underline{0.4598} & 0.4091 & 0.2784 & 0.4119 & \underline{0.6819} & \bf 0.2669 & 0.3793 & 0.3655 & 0.5320 & 0.6886 & \underline{0.2955} & 0.2553 & 0.1915 & 0.3118 \\
\cmidrule(lr){2-18}
~ & \multirow{3}{*}{\bf Uncertainty} & P(True)~\citep{kadavath2022language} & 0.7216 & 0.2681 & 0.5455 & 0.4068 & 0.5182 & 0.6207 & 0.2572 & 0.5172 & 0.4276 & 0.5533 & 0.5400 & 0.1684 & 0.4026 & 0.3030 & 0.4032 \\
~ & ~ & LN-Entropy~\citep{ren2022out} & 0.6896 & 0.3099 & 0.5000 & 0.3917 & 0.5096 & 0.5553 & 0.1129 & 0.3700 & 0.3200 & 0.4329 & 0.6123 & 0.2149 & 0.4156 & 0.3208 & 0.4461 \\
~ & ~ & PPL~\citep{malinin2020uncertainty} & 0.7025 & 0.2856 & \underline{0.5909} & \underline{0.4205} & 0.5267 & 0.5434 & 0.1144 & 0.3793 & 0.3034 & 0.3990 & 0.6432 & 0.2249 & 0.5745 & 0.4532 & 0.5241 \\
\cmidrule(lr){2-18}
~ & \bf Length & Length-Score~\citep{zeng2025revisiting} & 0.5351 & 0.0922 & 0.4318 & 0.2568 & 0.3408 & 0.5510 & 0.0911 & \underline{0.5793} & \underline{0.5034} & \underline{0.5737} & 0.5815 & 0.1496 & 0.5106 & 0.3887 & 0.4674 \\
\cmidrule(lr){2-18}
~ & \multirow{2}{*}{\bf PRM} & Qwen2.5-PRM800K~\citep{processbench} & 0.6601 & 0.2746 & 0.4773 & 0.3000 & 0.4572 & 0.6153 & 0.2203 & 0.4400 & 0.3605 & 0.4444 & 0.5694 & 0.1074 & 0.5065 & 0.4167 & 0.4990 \\
~ & ~ & Qwen2.5-PRM-7B~\citep{prmlessons} & 0.5563 & 0.1354 & 0.4318 & 0.2701 & 0.3913 & 0.5690 & 0.1275 & 0.2200 & 0.1425 & 0.2382 & 0.5422 & 0.0866 & 0.4026 & 0.2952 & 0.3947 
\\ \cmidrule(lr){2-18}
~ & \multirow{2}{*}{\bf LCM} & GPT4-o~\citep{achiam2023gpt} & 0.7513 & 0.3794 & 0.4091 & 0.2705 & 0.4131 & \bf 0.7045 & 0.2026 & 0.2500 & 0.2965 & 0.3200 & \underline{0.7123} & 0.2204 & 0.4043 & 0.2830 & 0.3704 \\
~ & ~ & Qwen2.5-32B~\citep{yang2024qwen2} & 0.6942 & 0.2082 & 0.2500 & 0.1955 & 0.2935 & 0.6525 & 0.2635 & 0.3103 & 0.2897 & 0.4458 & 0.6424 & 0.2056 & 0.4400 & 0.3300 & 0.4187 \\
\cmidrule(lr){2-18}
~ & \multirow{2}{*}{\bf Self-Aware} & UQAC~\citep{li2025languagemodeluncertaintyquantification} & 0.6671 & 0.2902 & 0.5833 & 0.3715 & \underline{0.5298} & 0.6303 & 0.2369 & 0.4700 & 0.3925 & 0.4885 & 0.6736 & 0.2583 & \underline{0.6623} & \underline{0.5335} & \underline{0.6425} \\
~ & ~ & EigenScore~\citep{cheninside} & \ \underline{0.7539} & 0.3868 & 0.4583 & 0.3250 & 0.3007 & 0.6488 & 0.2601 & 0.4260 & 0.3777 & 0.3815 & 0.6696 & 0.2858 & 0.5195 & 0.4113 & 0.3885 \\
\cmidrule(lr){2-18}
~ & \bf Ours & \bf RHD & \bf 0.7978 & \bf 0.4852 & \bf 0.6591 & \bf 0.4765 & \bf 0.5699 &  0.6528 & \underline{0.2662} & \bf 0.6207 & \bf 0.5448 & \bf 0.6009 & \bf 0.7361 & \bf 0.3863 & \bf 0.7660 & \bf 0.6255 & \bf 0.7103 \\
\midrule \midrule
\multirow{14}{*}{\textbf{R1-14B}} & \multirow{3}{*}{\bf Ensemble} & ChainPoll~\cite{friel2023chainpoll} & 0.5858 & 0.1658 & 0.2704 & 0.2535 & 0.3394 & 0.6640 & 0.3134 & 0.3261 & 0.1775 & 0.2188 & 0.5846 & 0.1607 & 0.2319 & 0.1972 & 0.2638 \\
~ & ~ & LMvLM~\cite{cohen2023lm} & 0.6620 & \bf 0.3835 & 0.2563 & 0.2507 & 0.3133 & 0.5435 & 0.2132 & 0.3333 & 0.2300 & 0.3421 & 0.6250 & 0.2914 & 0.2042 & 0.1885 & 0.2506 \\
~ & ~ & SelfCheckGPT~\cite{manakul2023selfcheckgpt} & 0.5714 & 0.2774 & 0.2462 & 0.2167 & 0.2930 & 0.5109 & 0.1048 & 0.3287 & 0.2566 & 0.3683 & 0.5208 & 0.1268 & 0.3167 & 0.3083 & 0.0320 \\
\cmidrule(lr){2-18}
~ & \multirow{3}{*}{\bf Uncertainty} & P(True)~\citep{kadavath2022language} & 0.6460 & 0.1443 & 0.2615 & 0.2374 & 0.4570 & 0.6645 & 0.2582 & 0.4828 & 0.3460 & 0.4885 & 0.6090 & 0.2057 & 0.3147 & 0.2508 & 0.4107 \\
~ & ~ & LN-Entropy~\citep{ren2022out} & 0.6423 & 0.2242 & 0.3479 & \underline{0.2939} & \underline{0.4754} & 0.6248 & 0.2134 & 0.5862 & 0.4147 & 0.5264 & 0.5337 & 0.0494 & 0.3125 & 0.2340 & 0.3678 \\
~ & ~ & PPL~\citep{malinin2020uncertainty} & 0.6526 & 0.2330 & \bf 0.3846 & 0.2744 & 0.4444 & 0.6219 & 0.1182 & 0.6000 & \underline{0.4215} & 0.5162 & 0.5337 & 0.1701 & 0.3058 & 0.2521 & 0.3630 \\
\cmidrule(lr){2-18}
~ & \bf Length & Length-Score~\citep{zeng2025revisiting} & 0.5184 & 0.0810 & 0.2817 & 0.2329 & 0.3400 & 0.5814 & 0.1487 & 0.5345 & 0.3848 & 0.4211 & 0.5971 & 0.1843 & 0.4711 & 0.3434 & 0.4284 \\
\cmidrule(lr){2-18}
~ & \multirow{2}{*}{\bf PRM} & Qwen2.5-PRM800K~\citep{processbench} & 0.5708 & 0.1285 & 0.3077 & 0.2697 & 0.4028 & \underline{0.7267} & \underline{0.4100} & 0.5862 & 0.3819 & 0.5132 & 0.6579 & 0.2451 & 0.4476 & 0.3366 & 0.4702 \\
~ & ~ & Qwen2.5-PRM-7B~\citep{prmlessons} & 0.5416 & 0.1249 & 0.3538 & 0.2918 & 0.4429 & 0.6983 & 0.3633 & 0.6133 & 0.4556 & \underline{0.5449} & 0.6674 & 0.2758 & 0.5045 & 0.3642 & 0.4853 \\
\cmidrule(lr){2-18}
~ & \multirow{2}{*}{\bf LCM} & GPT4-o~\citep{achiam2023gpt} & 0.6604 & 0.2458 & 0.2154 & 0.1785 & 0.3073 & 0.6265 & 0.1344 & 0.3333 & 0.1628 & 0.1933 & 0.6328 & 0.2356 & 0.2517 & 0.1878 & 0.2683 \\
~ & ~ & Qwen2.5-32B~\citep{yang2024qwen2} & 0.6650 & 0.3055 & 0.2676 & 0.2451 & 0.3632 & 0.6974 & 0.2381 & 0.3833 & 0.2150 & 0.3428 & \underline{0.7071} & 0.2716 & 0.3472 & 0.2517 & 0.4177 \\
\cmidrule(lr){2-18}
~ & \multirow{2}{*}{\bf Self-Aware} & UQAC~\citep{li2025languagemodeluncertaintyquantification} & 0.6374 & 0.2303 & 0.3444 & 0.2836 & \bf 0.5104 & {0.7157} & 0.3732 & \underline{0.6207} & 0.4170 & 0.5050 & 0.6952 & \underline{0.3397} & \underline{0.5417} & \underline{0.4222} & \underline{0.4988} \\
~ & ~ & EigenScore~\citep{cheninside} & \underline{0.6706} & \underline{0.3496} & 0.3282 & 0.2282 & 0.3388 & 0.6146 & 0.2228 & 0.4469 & 0.3508 & 0.3337 & 0.6719 & 0.3056 & 0.3694 & 0.3542 & 0.3750 \\
\cmidrule(lr){2-18}
~ & \bf Ours & \bf RHD & \bf{0.7292} & 0.3476 & \underline{0.3692} & \bf 0.3005 & 0.4644 & \bf 0.7649 & \bf 0.4506 & \bf 0.6667 & \bf 0.4714 & \bf 0.5671 & \bf 0.7255 & \bf 0.3742 & \bf 0.5785 & \bf 0.4421 & \bf 0.5154 \\
\bottomrule
\end{tabular}
}
\caption{Performance comparisons between RHD and baselines for Reasoning Hallucination Detection. The boldface represents the
best performance, and the underline represents the second-best.}
\label{tab:hallucination-eval}
\end{table}
\section{Experiments}
\label{sec:exp}
%

\subsection{Reasoning Hallucination Detection}
\label{sec:rhd_dataset}
\textbf{Data and Evaluation.}
We evaluate our RHD method on the \textbf{ReTruthQA} dataset spanning three reasoning domains: \texttt{Math}, \texttt{Science}, and \texttt{MultiHopQA} (construction details in Appendix~\ref{app:retruthqa}). We adopt two evaluation settings: 
(1) \textbf{Binary Detection}, which assesses the model’s ability to detect hallucinations in individual $(Q, C)$ pairs using AUC and PCC;
(2) \textbf{Multi-Trace Ranking}, which evaluates whether the model can rank truthful traces higher among multiple candidates $(Q, \{C_1, ..., C_N\})$, following TruthfulQA-MC~\citep{lin2021truthfulqa}. We report MC1, MC2, and MC3 to measure hallucination ranking accuracy (Evaluation details are in Appendix~\ref{app:retruthqa_eval}).

\textbf{Models and Baselines.}
We conduct experiments on two open-source LRM: \texttt{DeepSeek-R1-Distill-Qwen-7B} (R1-7B) and \texttt{DeepSeek-R1-Distill-Qwen-14B (R1-14B)} ~\cite{DeepSeekAI2025}. 
We compare our method against six categories of hallucination detection baselines:  
(1) \text{Ensemble based self-evaluation} (e.g., ChainPoll~\citep{friel2023chainpoll});  
(2) \text{Uncertainty based methods} (e.g., P(True)~\cite{kadavath2022language});  
(3) \text{Self-Awareness based approaches} (e.g., UQAC~\cite{li2025languagemodeluncertaintyquantification});  
(4) \text{LLM-as-Critic }(LCM) models (e.g., \texttt{GPT-4o});  
(5) \text{Process Reward Models }(PRMs) with step-level supervision (e.g., \texttt{Qwen2.5-Math-PRM});  
(6) \text{Length-based scoring}, which uses trace length as a proxy for hallucination likelihood. 
Baselines and RHD implementation details are provided in Appendix~\ref{app:retruthqa_eval} and ~\ref{app:rhd_exp}.

\textbf{Main Results.}
As shown in Table~\ref{tab:hallucination-eval}, RHD consistently outperforms most baselines across all ReTruthQA domains, model backbones, and evaluation settings, demonstrating strong robustness. Ensemble and LCM methods perform well in binary detection but struggle in multi-trace ranking, indicating difficulty in fine-grained comparison. Uncertainty-based methods are sensitive to output length, while Process Reward Models often suffer from limited generalization. In contrast, RHD directly leverages reasoning mechanisms for more accurate detection. Self-awareness methods perform competitively but lack explicit reasoning analysis. Interestingly, the Length-based baseline performs well in multi-trace settings—supporting the intuition that overly long traces are more error-prone, but underperforms in binary detection, limiting its generality.
These findings highlight the effectiveness of RHD modeling internal reasoning patterns for hallucination detection. Additional ablations and sensitivity studies are provided in Appendix~\ref{app:ablation_rhd} and~\ref{app:sen_rhd}.

\begin{table}[t]
\centering
\resizebox{0.99\linewidth}{!}{
\renewcommand\arraystretch{1.15}
\setlength{\tabcolsep}{1.5mm}
\begin{tabular}{lcccccccccc}
\toprule
\textbf{Models} & \multicolumn{5}{c}{\textbf{DeepSeek-R1-1.5B}} & \multicolumn{5}{c}{\textbf{Qwen2.5-1.5B-Instruct}} \\
\cmidrule(lr){2-6} \cmidrule(lr){7-11}
~ & \textbf{MATH500} & \textbf{AIME(2024)} & \textbf{GPQA(diamond)} & \textbf{GPQA(main)} & \textbf{GPQA(extended)} 
  & \textbf{MATH500} & \textbf{AIME(2024)} & \textbf{GPQA(diamond)} & \textbf{GPQA(main)} & \textbf{GPQA(extended)} \\
\midrule
Base       & 0.772 & 0.333 & 0.354 & 0.333 & 0.339 & 0.466 & 0.100 & 0.202 & 0.197 & 0.211 \\
+GRPO    & 0.770 & 0.333 & 0.359 & 0.335 & \textbf{0.359} & 0.480 & 0.033 & \textbf{0.247} & 0.214 & 0.266 \\
+GRPO-R  & \textbf{0.788} & \textbf{0.367} & \textbf{0.414} & \textbf{0.371} & {0.357} 
                       & \textbf{0.490} & \textbf{0.133} & \textbf{0.247} & \textbf{0.243} & \textbf{0.275} \\
\bottomrule
\end{tabular}
}
\caption{Performance comparisons between GRPO-R and baselines. Bold indicates the best result.}
\label{tab:hallucination_mitigation}
\end{table}

\subsection{Reasoning Hallucination Mitigation}

\textbf{Experimental Setting.} 
To assess the effectiveness of GRPO-R in reducing reasoning hallucinations, we fine-tune \texttt{Qwen2.5-1.5B-Instruct} and \texttt{DeepSeek-R1-1.5B} on 2,000 examples from \texttt{OpenR1-Math-220K}~\citep{openr1math2024} using either GRPO or our proposed GRPO-R. We evaluate the accuracy~\citep{math_verify} on two in-domain math benchmarks—\texttt{MATH500}~\cite{lightman2023let} and \texttt{AIME 2024}~\cite{aimo_validation_aime}—and an out-of-distribution science benchmark—\texttt{GPQA}~\citep{rein2024gpqa}. Implementation details are in Appendix~\ref{app:imp_rhm}.

\textbf{Main Results.}
As shown in Table~\ref{tab:hallucination_mitigation}, GRPO-R outperforms GRPO across most of the tasks, indicating that shaping reasoning steps via the reasoning score enhances both factual accuracy and reasoning reliability. Gains on \texttt{GPQA} further suggest improved generalization beyond training distribution. Additional sensitivity analyses are in Appendix~\ref{app:sen_rhm}. Hallucination mitigation experiments in data distillation in Appendix~\ref{app:rhd_distill} further validate the effectiveness of our proposed RHD model.

\section{Conclusion and Limitation}
\label{sec:limi}
We tackle the challenge of \textit{Reasoning Hallucination} in LRMs, where models produce logically coherent but factually incorrect reasoning traces. To address this, we propose the \textbf{Reasoning Score}, a step-level metric derived from the mechanistic interpretability of reasoning, which effectively quantifies reasoning depth. Based on this score, we identify three key hallucination patterns: early-stage depth fluctuations, incorrect backtracking and spurious verification-induced overthinking—and develop the \textbf{RHD} framework for hallucination detection. Building on these insights, we introduce \textbf{GRPO-R}, a reinforcement learning method that integrates step-level reasoning rewards via potential-based shaping, improving both accuracy and robustness across reasoning benchmarks.

\paragraph{Limitation.}
RHD relies on internal model activations and is thus limited to open-source LRMs with accessible activations. Its application to black-box models remains an open challenge. Nonetheless, the discovered patterns and metrics could inspire proxy-based extensions.  Additionally, experiments are conducted on moderate-scale models and datasets due to computational constraints; future work includes scaling up to broader domains and model families.

\bibliographystyle{abbrvnat}
\bibliography{neurips_2025}


\appendix

\section{Proof of Generalization Gap with Augmented Rewards}
\label{app:proof_gen}


\begin{proof}[Proof of Theorem~\ref{thm:LRM:gb}]
For any policy $\pi \in \Pi$, define  the augmented return
\[
R(\pi,\xi) = \sum_{t=1}^{T} \gamma^{t-1} \bar{r}_t(\xi).
\]
Assume that $\bar{r}_t(\xi) \in [0, \bar{r}_{\max}]$ for each $t$, so that
\[
R(\pi,\xi) \in [0, \bar{R}_{\max}].
\]

Define the expected return:
\[
J_{\text{test}}(\pi) = \mathbb{E}_{\xi \sim \mathcal{D}} \left[ R(\pi,\xi) \right],
\]
and the empirical return:
\[
J_{\text{train}}(\pi) = \frac{1}{n} \sum_{i=1}^{n} R(\pi,\xi_i).
\]

We aim to bound the expected generalization gap between the test return and empirical return for policies in class $\Pi$ via Rademacher complexity. Let the function class be defined as
\[
\mathcal{F} = \left\{ f_\pi(\xi) = R(\pi, \xi) \mid \pi \in \Pi \right\},
\]
where $R(\pi, \xi)$ is the total return over trajectory $\xi$ under policy $\pi$ using the augmented reward $\bar{r}_t$. Our goal is to bound:
\[
\sup_{\pi \in \Pi} \left| J_{\text{test}}(\pi) - J_{\text{train}}(\pi) \right| = \sup_{f \in \mathcal{F}} \left| \mathbb{E}[f(\xi)] - \frac{1}{n} \sum_{i=1}^{n} f(\xi_i) \right|.
\]

Let $\xi_1, \ldots, \xi_n$ be the training samples drawn i.i.d. from the environment distribution $\mathcal{D}$, and $\xi_1', \ldots, \xi_n'$ be another independent copy drawn from the same distribution. By using an independent ghost sample set and the triangle inequality, we have:
\begin{align*}
\mathbb{E}_{\{\xi_i\}} \left[ \sup_{f \in \mathcal{F}} \left( \mathbb{E}_{\xi \sim \mathcal{D}}[f(\xi)] - \frac{1}{n} \sum_{i=1}^{n} f(\xi_i) \right) \right]
&= \mathbb{E}_{\{\xi_i\}, \{\xi_i'\}} \left[ \sup_{f \in \mathcal{F}} \left( \frac{1}{n} \sum_{i=1}^{n} f(\xi_i') - f(\xi_i) \right) \right] \\
&\leq \mathbb{E}_{\{\xi_i\}, \{\xi_i'\}} \left[ \sup_{f \in \mathcal{F}} \frac{1}{n} \sum_{i=1}^{n} \left( f(\xi_i') - f(\xi_i) \right) \right].
\end{align*}

To simplify the expression, we now introduce independent Rademacher variables $\sigma_1, \ldots, \sigma_n \in \{-1, +1\}$, where each $\sigma_i$ takes value $+1$ or $-1$ with equal probability. Since $f(\xi_i') - f(\xi_i)$ is symmetric around zero due to $\xi_i \sim \xi_i'$, we can write:
\[
\mathbb{E}_{\{\xi_i\}, \{\xi_i'\}} \left[ \sup_{f \in \mathcal{F}} \frac{1}{n} \sum_{i=1}^{n} \left( f(\xi_i') - f(\xi_i) \right) \right]
= \mathbb{E}_{\{\xi_i\}, \{\xi_i'\}, \{\sigma_i\}} \left[ \sup_{f \in \mathcal{F}} \frac{1}{n} \sum_{i=1}^{n} \sigma_i \left( f(\xi_i') - f(\xi_i) \right) \right].
\]

We now apply the triangle inequality again:
\[
\sup_{f \in \mathcal{F}} \sum_{i=1}^{n} \sigma_i \left( f(\xi_i') - f(\xi_i) \right)
\leq \sup_{f \in \mathcal{F}} \sum_{i=1}^{n} \sigma_i f(\xi_i') + \sup_{f \in \mathcal{F}} \sum_{i=1}^{n} (-\sigma_i) f(\xi_i).
\]

Since $-\sigma_i$ is still a Rademacher variable and $\xi_i$ and $\xi_i'$ have the same distribution, the two expectations are equal. Thus, we obtain:
\begin{align*}
\mathbb{E}_{\{\xi_i\}, \{\xi_i'\}} \left[ \sup_{f \in \mathcal{F}} \frac{1}{n} \sum_{i=1}^{n} \left( f(\xi_i') - f(\xi_i) \right) \right]
&\leq 2 \mathbb{E}_{\{\xi_i\}, \{\sigma_i\}} \left[ \sup_{f \in \mathcal{F}} \frac{1}{n} \sum_{i=1}^{n} \sigma_i f(\xi_i) \right] \\
&= 2 \mathcal{R}_n(\mathcal{F}),
\end{align*}
where $\mathcal{R}_n(\mathcal{F})$ is the empirical Rademacher complexity of $\mathcal{F}$.


Assume every return is bounded,
$0 \le f_\pi(\xi) \le \bar{R}_{\max}$,
and that $f_\pi(\xi)$ is linear in the augmented per–step rewards
$ \bar{r}_t(\xi)$:
\[
f_\pi(\xi)
   \;=\;
\sum_{t=1}^{T} \gamma^{t-1} \bar{r}_t(\xi).
\]
Introduce the normalised return
$\tilde f_\pi(\xi) := f_\pi(\xi)\big/\bar{R}_{\max} \in [0,1]$
and let $\tilde{\mathcal{F}}
     := \{ \tilde f_\pi \mid \pi \in \Pi \}$.
Because Rademacher complexity is positively homogeneous in its
function class,
\[
\mathcal{R}_n(\mathcal{F})
   \;=\;
\mathcal{R}_n\!\bigl(\bar{R}_{\max}\,\tilde{\mathcal{F}}\bigr)
   \;=\;
\bar{R}_{\max}\,\mathcal{R}_n(\tilde{\mathcal{F}}).
\]
We measure the complexity of the policy class precisely through these
normalised returns and set
\[
\mathcal{R}_n(\Pi) := \mathcal{R}_n(\tilde{\mathcal{F}}).
\]

\textit{Justification.}
Even if the mapping $\pi \!\mapsto\! \tilde f_\pi$ is not injective,
Rademacher complexity is \textbf{monotone} with respect to set inclusion:
enlarging the function class can only increase
$\mathcal{R}_n$.
Hence analysing the (possibly larger) class
$\tilde{\mathcal{F}}$ yields a conservative upper bound
on the true policy complexity—exactly what we need for a valid
generalisation bound.

Combining the two displays yields
\[
\;
\mathcal{R}_n(\mathcal{F}) \;\le\;
\bar{R}_{\max}\,\mathcal{R}_n(\Pi)
\;
\]
(the identity can be written as “$\le$’’ because any alternative
normalisation would only shrink the right–hand side).

Substituting the above bound into the symmetrisation result, we obtain
\[
\mathbb{E}
\Bigl[
  \sup_{\pi \in \Pi}
  \bigl| J_{\mathrm{test}}(\pi) - J_{\mathrm{train}}(\pi) \bigr|
\Bigr]
\;\le\;
2\,\bar{R}_{\max}\,\mathcal{R}_n(\Pi),
\]

We now move from the expected generalization gap to a high-probability bound that holds uniformly over all policies $\pi \in \Pi$.

\vspace{0.5em}
Let $X_i = R(\pi, \xi_i) = \sum_{t=1}^{T} \gamma^{t-1} \bar{r}_t(\xi_i)$ be the augmented return of policy $\pi$ on the $i$-th training trajectory. Then $J_{\text{train}}(\pi) = \frac{1}{n} \sum_{i=1}^{n} X_i$ and $J_{\text{test}}(\pi) = \mathbb{E}_{\xi \sim \mathcal{D}}[X_i]$. By assumption, $X_i \in [0, \bar{R}_{\max}]$.

Applying Hoeffding's inequality for bounded i.i.d. variables, we have for any fixed $\pi \in \Pi$:
\[
\Pr\left( \left| J_{\text{test}}(\pi) - J_{\text{train}}(\pi) \right| \ge \varepsilon \right)
\le 2\exp\left( - \frac{2n\varepsilon^2}{(\bar{R}_{\max})^2} \right).
\]
Solving for $\varepsilon$ yields that with probability at least $1 - \delta$,
\begin{equation}
\left| J_{\text{test}}(\pi) - J_{\text{train}}(\pi) \right|
\le \bar{R}_{\max} \sqrt{\frac{\log(1/\delta)}{2n}}.
\tag{16}
\end{equation}

\vspace{0.5em}

Define the worst-case generalization gap over the policy class:
\[
\Delta(\mathcal{S}) := \sup_{\pi \in \Pi} \left( J_{\text{test}}(\pi) - J_{\text{train}}(\pi) \right),
\]
where $\mathcal{S} = \{\xi_1, \dots, \xi_n\}$ is the training set.

\textit{(i) Expected bound from above:} 
Using symmetrization and Rademacher complexity arguments, we already established:
\begin{equation}
\mathbb{E}_{\mathcal{S}}[\Delta(\mathcal{S})] \le 2 \bar{R}_{\max} \mathcal{R}_n(\Pi).
\label{eq:e_delta}
\end{equation}

\textit{(ii) High-probability deviation bound via McDiarmid’s inequality:}
Let us show that $\Delta(\mathcal{S})$ concentrates around its expectation. Consider replacing any single sample $\xi_i$ in $\mathcal{S}$ by an independent copy $\xi_i'$. Because each return $X_i = R(\pi, \xi_i)$ is bounded in $[0, \bar{R}_{\max}]$ and each contributes $\frac{1}{n}$ to the empirical mean, the influence of changing $\xi_i$ is bounded by:
\[
\left| \Delta(\mathcal{S}) - \Delta(\mathcal{S}^{(i)}) \right| \le \frac{\bar{R}_{\max}}{n}.
\]
Hence, $\Delta(\mathcal{S})$ is $\bar{R}_{\max}/n$-Lipschitz in each of its $n$ arguments.

Applying McDiarmid's inequality:
\[
\Pr\left( \Delta(\mathcal{S}) - \mathbb{E}[\Delta(\mathcal{S})] \ge \varepsilon \right)
\le \exp\left( - \frac{2\varepsilon^2}{\sum_{i=1}^n (\bar{R}_{\max}/n)^2} \right)
= \exp\left( - \frac{2n\varepsilon^2}{(\bar{R}_{\max})^2} \right).
\]
Solving for $\varepsilon$ again yields that with probability at least $1 - \delta$,
\begin{equation}
\Delta(\mathcal{S}) \le \mathbb{E}[\Delta(\mathcal{S})] + \bar{R}_{\max} \sqrt{\frac{\log(1/\delta)}{2n}}.
\label{eq:bound_max}
\end{equation}

\textit{(iii) Final generalization gap:} Combining Equation~\ref{eq:e_delta} and \ref{eq:bound_max}, with probability at least $1 - \delta$ over the random draw of the training set $\mathcal{S}$, we obtain:
\[
\sup_{\pi \in \Pi} \left[ J_{\text{test}}(\pi) - J_{\text{train}}(\pi) \right]
\le 2 \bar{R}_{\max} \mathcal{R}_n(\Pi) + \bar{R}_{\max} \sqrt{\frac{\log(1/\delta)}{2n}}.
\]
Equivalently, for all $\pi \in \Pi$,
\begin{equation}
 J_{\text{test}}(\pi) - J_{\text{train}}(\pi)  \leq 2 \bar{R}_{\max}\, \mathcal{R}_n(\Pi) + \bar{R}_{\max} \sqrt{\frac{\log(1/\delta)}{2n}}
\label{eq:gen_final}
\end{equation}
\end{proof}

\noindent\textbf{Conclusion.} Equation~\ref{eq:gen_final} provides a uniform generalization gap for any policy $\pi \in \Pi$, showing that the expected test-time performance is lower bounded by the training performance minus a complexity-dependent regularization term. According to this theorem, as the augmented reward $R_{\rm score}(s_t)$ is well-aligned with genuine logical reasoning, it acts as a regularizer that effectively reduces the Rademacher complexity $\mathcal{R}_n(\Pi)$, thereby tightening the bound. This theoretical result highlights that our proposed process supervision framework not only improves credit assignment during training but also enhances generalization to unseen reasoning tasks.

\section{Detailed Implementation of GRPO-R}
\label{app:grpo_r_details}
Our proposed process-level reasoning score supervision is compatible with any token-level RL algorithm. In this work, we instantiate it within Group Relative Policy Optimization (GRPO), yielding \textbf{GRPO-R}. GRPO is a scalable and widely used RL framework for reasoning model training, which promotes the generation of high-quality reasoning trajectories by ranking $G$ candidate outputs based on their relative returns, without relying on explicit value estimation~\cite{DeepSeekAI2025,shao2024deepseekmath}. 

Given a prompt $q$ and $G$ outputs $\{o_i\}_{i=1}^G$, each output $o_i$ corresponds to a sequence of reasoning states $\{s_{i,1}, \dots, s_{i,K}\}$ produced over $K$ reasoning steps. In the original GRPO setup, only the final step receives a nonzero reward:
\[
r_i^{\mathrm{step}}(j)
=
\begin{cases}
r_i^{\mathrm{final}}, & j = K,\\
0, & j < K,
\end{cases}
\]
where \( r_i^{\mathrm{final}} \) denotes the scalar reward assigned to the final outcome.

We replace this sparse signal with our shaped step-level reward using potential-based reward shaping:
\[
\bar{r}_i^{\mathrm{step}}(j)
=\;\tilde r_i^{\mathrm{step}}(j)
\;-\;\gamma\,\tilde R_{\mathrm{score}}(s_{i,j+1})
\;+\;\tilde R_{\mathrm{score}}(s_{i,j}),
\]
where \( \tilde R_{\mathrm{score}}(s) = \min\bigl(R_{\mathrm{score}}(s), \tau\bigr) \) and we set \( \gamma = 1 \). These shaped rewards are collected into the set \( \mathbf{R}' \), standardized as:
\[
\hat r_i^{\mathrm{step}}(j)
=\;\frac{\bar{r}_i^{\mathrm{step}}(j)-\text{mean}(\mathbf{R}')}
{\text{std}(\mathbf{R}')},
\]
and used to compute token-level advantages:
\[
\hat A_{i,t}=\sum_{j:\,\mathrm{step}(j)\ge t}\hat r_i^{\mathrm{step}}(j).
\]

Finally, we optimize the policy using the enhanced GRPO objective, termed GRPO-R:
\begin{align}
\mathcal{J}_{\text{GRPO-R}}(\theta) = 
\mathbb{E}_{q \sim P(Q),\, \{o_i\} \sim \pi_{\theta_{\text{old}}}(O \mid q)} 
\Bigg[
\sum_{i=1}^{G} \sum_{t=1}^{|o_i|} 
&\min \Big(  \frac{ \pi_\theta(o_{i,t} \mid q, o_{i,<t}) }{ \pi_{\theta_{\text{old}}}(o_{i,t} \mid q, o_{i,<t}) } \, \hat A_{i,t}, \nonumber \\
&\quad \text{clip}\big(  \frac{ \pi_\theta(o_{i,t} \mid q, o_{i,<t}) }{ \pi_{\theta_{\text{old}}}(o_{i,t} \mid q, o_{i,<t}) }, 1 - \epsilon, 1 + \epsilon \big) \, \hat A_{i,t} \Big) \nonumber \\
&\quad - \beta \cdot \mathrm{D}_{\mathrm{KL}} \left[ \pi_\theta \Vert \pi_{\text{ref}} \right]
\Bigg].
\end{align}

\begin{table}
\centering
\caption{Statistics of ReTruthQA dataset across domains.}
\label{tab:reasoning_statistics}
\begin{tabular}{l|cccc}
\toprule
   Dataset &  \#Samples &  \#Traces &  Avg Truthful Traces &  Avg Hallucination Traces \\
\midrule
      MATH &       57 &     417 &     3.35 &     3.96 \\
   Science &       88 &     541 &     3.05 &     3.10 \\
MultiHopQA &      184 &    1186 &     2.74 &     3.70 \\
\bottomrule
\end{tabular}
\end{table}

\section{ReTruthQA Construction}
\label{app:retruthqa}
\subsection{Data Sources and Models} 
Due to the lack of dedicated datasets for evaluating reasoning hallucination detection methods, especially for strong open-source models such as \texttt{DeepSeek-R1-7B} and \texttt{R1-14B}, we construct a new benchmark tailored for hallucination detection in multi-step reasoning tasks.
We select three major categories of reasoning tasks: \texttt{Math}, \texttt{Science}, and \texttt{MultiHopQA}. 

For \texttt{Math}, we construct the dataset using benchmark datasets commonly used for evaluating mathematical reasoning capabilities, including \texttt{MATH500}~\citep{lightman2023let}, \texttt{AMC 2023}~\citep{aimo_validation_amc}, and \texttt{AIME 2024}~\citep{aimo_validation_aime}. 

For \texttt{Science}, we adopt \texttt{GPQA}~\citep{rein2024gpqa}, a PhD-level science multiple-choice QA dataset with questions authored by domain experts in physics, chemistry, and biology.

For \texttt{MultiHopQA}, we randomly sample 1000 questions from four multi-hop QA datasets: \texttt{HotpotQA}~\citep{yang2018hotpotqa}, \texttt{2WikiMultihopQA}~\citep{ho2020constructing}, \texttt{MuSiQue}~\citep{trivedi2022musique}, and \texttt{Bamboogle}~\citep{press2022measuring}.

For each question, we generate 20 responses using \texttt{DeepSeek-R1-Distill-Qwen-7B} and \texttt{DeepSeek-R1-Distill-Qwen-14B} via random sampling. The prompting format is as follows:

\textbf{Math:}
\begin{tcolorbox}[colback=gray!20, colframe=gray, coltitle=white, fonttitle=\bfseries]
Please answer the following math question.  \\
You should provide your final answer in the format \texttt{\textbackslash boxed\{YOUR\_ANSWER\}}. \\
Separate your following steps using \texttt{\textbackslash n\textbackslash n}. \\
\texttt{Question:\textbackslash n\textbackslash n}
\end{tcolorbox}

\textbf{Science:}
\begin{tcolorbox}[colback=gray!20, colframe=gray, coltitle=white, fonttitle=\bfseries]
Please answer the following multiple-choice question.  \\
You should provide your final choice in the format \texttt{\textbackslash boxed\{YOUR\_CHOICE\}}. \\
Separate your following steps using \texttt{\textbackslash n\textbackslash n}. \\
\texttt{Question:\textbackslash n\textbackslash n}
\end{tcolorbox}

\textbf{MultiHopQA:}
\begin{tcolorbox}[colback=gray!20, colframe=gray, coltitle=white, fonttitle=\bfseries]
Please answer the following question.  \\
You should provide your final answer in the format \texttt{\textbackslash boxed\{YOUR\_ANSWER\}}. \\
Separate your following steps using \texttt{\textbackslash n\textbackslash n}. \\
\texttt{Question:\textbackslash n\textbackslash n}
\end{tcolorbox}

\subsection{Reasoning Step Segmentation Strategy}
\label{app:step_seg}
We adopt a two-stage segmentation procedure. First, we split the reasoning trace based on cognitive behavior tokens such as \texttt{</think>}, \texttt{Wait}, \texttt{But}, \texttt{However}, \texttt{Hmm}, \texttt{Alternatively}, which typically mark transitions in reasoning patterns. Then, we apply a finer-grained split based on formatting: as specified in the prompt, the LRM is instructed to separate reasoning steps using \texttt{\textbackslash n\textbackslash n}, which we use as a delimiter. This hybrid approach ensures both rule-based and model-aligned step boundaries.

\subsection{Annotation Process}

\textbf{1. Automatic hallucination trace identification.}  
To ensure precision and avoid noise caused by random model errors, a reasoning trace is labeled as hallucinated only if its rollout becomes incorrect with a failure rate exceeding 90\% from a specific reasoning step onward, measured over 16 rollouts.  We adopt a binary search–style trace slicing procedure inspired by OmegaProcess~\citep{luo2024improve} to efficiently identify hallucination points. This strategy ensures stability and causality in hallucination step detection, avoiding incidental errors due to sampling randomness. For the Science domain, which mainly consists of multiple-choice questions and may contain correct guesses, we additionally perform multiple random rollouts for traces with correct answers to ensure a success rate above 90\% before labeling them as truthful.

\textbf{2. Filtering non-hallucination failures.}  
We use GPT-4o-Mini to exclude samples where the incorrect final answer is due to clearly flawed or illogical reasoning, which does not satisfy our definition of hallucination (i.e., coherent and persuasive chains with underlying logical or factual errors). The filtering prompt is:

\begin{tcolorbox}[colback=gray!20, colframe=gray, coltitle=white, fonttitle=\bfseries]
Please evaluate if the following reasoning for the given question is logically sound and leads to a correct solution. \\
Only respond with a score between 0 and 1, where: \\
0: completely incorrect or illogical reasoning \\
1: perfectly sound and correct reasoning \\

\texttt{Question: \{question\}} \\
\texttt{Reasoning: \{reasoning\}} \\
Score (0--1):
\end{tcolorbox}

\begin{figure}[t]
    \centering
    \includegraphics[width=0.5\textwidth]{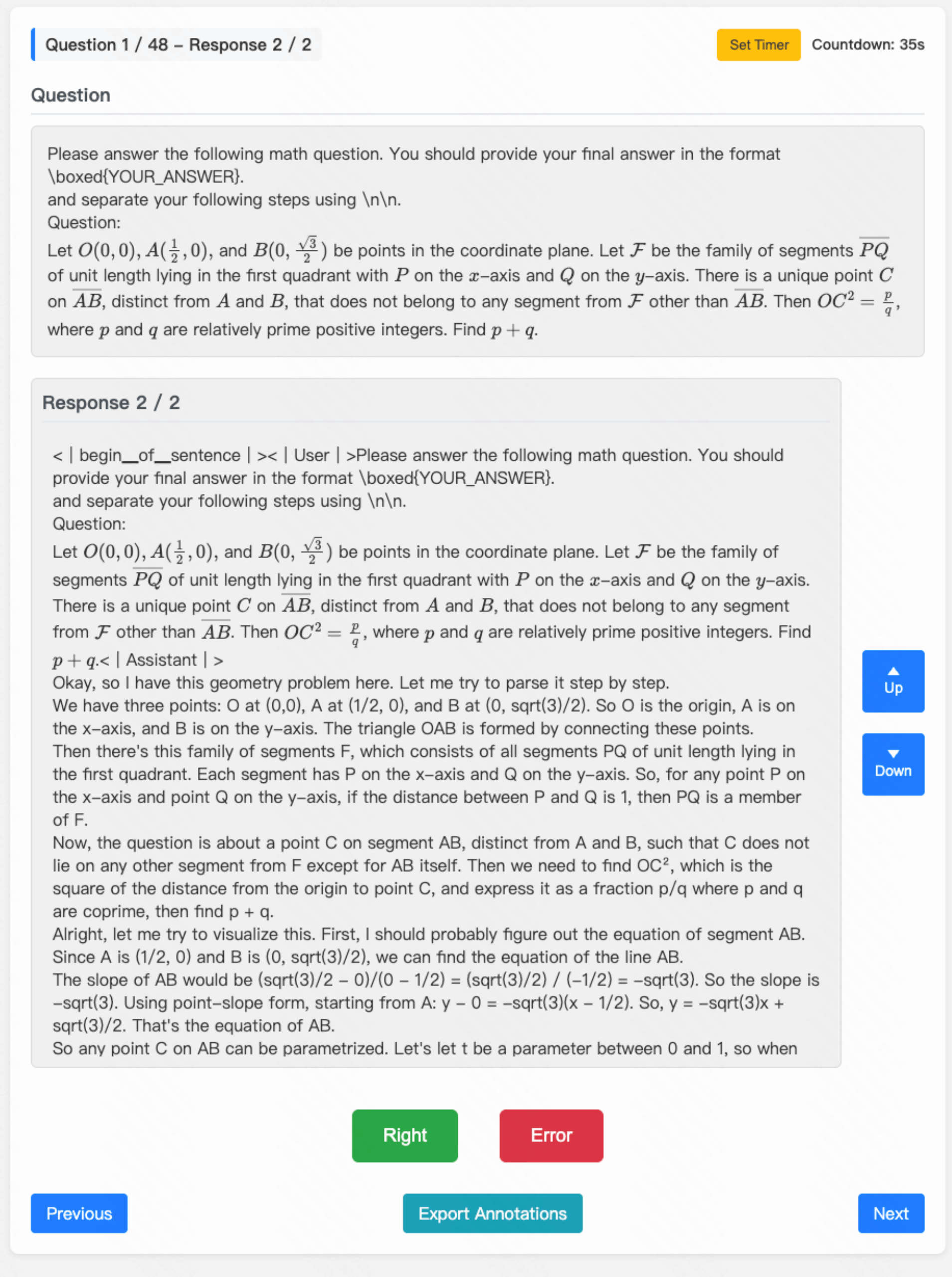}
    \caption{Interface Display of the Data Annotation Platform.}
    \label{fig:data_annotation}
\end{figure}

\textbf{3. Human validation.}  
We further perform human annotation to verify borderline cases. Two annotators with at least undergraduate-level backgrounds in computer science independently assess whether the reasoning trace is valid. We developed a web-based annotation platform with a timer (\autoref{fig:data_annotation}) to standardize reading time. Based on average reading speeds (200–300 wpm for academic text), and trace lengths (typically 2000–3000 words), we set the following maximum judgment times: (1) MultiHopQA: 3 minutes
  (2) Math: 5 minutes
  (3) Science: 8 minutes

Annotators must determine within the allotted time whether a reasoning trace contains hallucinations. If they fail to identify an error in time, the trace is labeled as correct. Cases judged correct by humans but verified to be incorrect are labeled as hallucinations, ensuring that the resulting dataset captures only traces that genuinely mislead users, which is aligned with the definition of reasoning hallucination.

Final dataset statistics are shown in \autoref{tab:reasoning_statistics}. For the \textbf{Multi-Trace Ranking Setting}, we directly use the collected hallucinated and truthful responses. For the \textbf{Binary Detection Setting}, which focuses on single-response accuracy, we retain one hallucinated and one truthful response per question to reflect more realistic ad-hoc usage scenarios.

\section{GSM-NoOp Construction Process}
\label{app:GSM-NoOp}
Following the construction procedure proposed in~\cite{mirzadeh2024gsm}, we randomly sample 300 examples from the GSM8K dataset. For each question, we use \texttt{GPT-4o} to generate a No-Op phrase using the following prompt:

\begin{tcolorbox}[colback=gray!20, colframe=gray, coltitle=white, fonttitle=\bfseries]
Given the following math question, generate a seemingly relevant but ultimately 
inconsequential statement (No-Op) that can be added to the question without affecting its solution.

Question: \{Question\}

Generate a No-Op statement that: \\
1. Is short and concise \\
2. Seems relevant to the context \\
3. Does not affect the mathematical reasoning \\
4. Is natural and fits grammatically

No-Op statement:
\end{tcolorbox}

We then use \texttt{GPT-4o} to combine the generated No-Op phrase with the original question using the following prompt:

\begin{tcolorbox}[colback=gray!20, colframe=gray, coltitle=white, fonttitle=\bfseries]
Please combine the following math question and No-Op phrase into a single, 
natural-sounding question. The No-Op phrase should be integrated smoothly 
without changing the mathematical meaning.

Math Question: \{Question\}
No-Op Phrase: \{NoOp Phrase\}

Combined Question:
\end{tcolorbox}

The merged questions form our constructed \textbf{GSM-NoOp} dataset.

To evaluate whether the generated reasoning steps are misled by the inserted No-Op phrase, we prompt \texttt{GPT-4o} with the following instruction:

\begin{tcolorbox}[colback=gray!20, colframe=gray, coltitle=white, fonttitle=\bfseries]
Please evaluate if the following reasoning step is being misled by the given No-Op phrase. 
Provide a score between 0 and 1, where:\\
a. 0 means the step is not misled by the No-Op phrase at all\\
b. 1 means the step is completely misled by the No-Op phrase\\
c. Values in between indicate partial misleading\\

Note: Simply mentioning the No-Op phrase does not count as being misled. 
If the step mentions the No-Op phrase but explicitly rejects or explains why it is irrelevant to solving the problem, this should be scored as 0.

Reasoning step: \{Reasoning Step\}
No-Op phrase: \{NoOp Phrase\}

Please provide only a number between 0 and 1, with up to 2 decimal places, 
wrapped in \textbackslash boxed\{\}. For example: \textbackslash boxed\{0.85\}
\end{tcolorbox}

\section{Details of Understanding the Mechanisms Behind Reasoning Hallucination Patterns}
\label{app:detail_pattern_analysis}
In this section, we focus on analyzing the underlying cause of \textbf{Pattern \#1}, as \textbf{Pattern \#2} has already been explained through the attention behavior of LRMs in the previous section. Pattern \#1 highlights that hallucinated reasoning traces tend to exhibit larger fluctuations in reasoning depth, particularly in the early steps. Inspired by our preliminary analysis in \S~\ref{sec:case_study}, we hypothesize that this may stem from the model’s built-in verification capability. However, several key questions remain: \textbf{Q1}: What triggers verification behavior in LRMs? \textbf{Q2}: Do excessively high reasoning scores genuinely indicate overthinking? \textbf{Q3}: If Q2 holds, what factors lead to the emergence of such overthinking steps?

To answer these questions, we construct reasoning step triples \((c_1, c_2, c_3)\) with different properties drawn from reasoning traces:
\textbf{Stable}: The first type consists of triples from truthful traces where adjacent steps differ in \( R_{\text{score}} \) by less than 0.1, representing stable reasoning.
\textbf{Rising-1}: The second type contains hallucinated triples where \( R_{\text{score}}(c_3) - R_{\text{score}}(c_2) > 1 \) and \( R_{\text{score}}(c_3) < 4 \), used to analyze verification triggered by shallow pattern-matching.
\textbf{Rising-2}: The third type is similar to Rising-1 but with \( R_{\text{score}}(c_3) > 4 \), aimed at understanding overthinking induced by verification.

\paragraph{Analysis.}
To investigate \textbf{Q1}, we analyze whether reasoning steps \( c_1 \) and \( c_2 \) in the stable and rising (Rising-1 + Rising-2) triples are logically consistent, using GPT-4o as the judge (prompt details in Appendix~\ref{app:prompt_hall_pattern}). As shown in \autoref{fig:emp_2}(a), the stable triples exhibit significantly higher consistency between \( c_1 \) and \( c_2 \) than rising triples, indicating that LRMs are more likely to trigger verification when early steps are internally inconsistent.

To examine \textbf{Q2}, we evaluate the correctness of \( c_2 \) and \( c_3 \) in Rising-2 triples. Using ground-truth answers and GPT-4o-based annotation (prompt details in Appendix~\ref{app:prompt_hall_pattern}), we assess whether these steps are logically aligned with the ground-truth answers. As shown in \autoref{fig:emp_2}(b), \( c_2 \) in Rising-2 triples is substantially more accurate than \( c_3 \), confirming that verification in this case often modifies correct reasoning into incorrect steps. These findings support the hypothesis that excessively high \( R_{\text{score}} \) values in hallucinated reasoning traces are symptomatic of overthinking—steps that exhibit apparent reasoning depth but in fact reflect spurious or detrimental reasoning.

To address \textbf{Q3}, we analyze the relationship between perplexity and \( R_{\text{score}} \). Specifically, we randomly sample 200 reasoning steps from ReTruthQA and compute their perplexities as follows:
\begin{align}
\operatorname{PPL}(c_k) &= \exp\left( - \frac{1}{|c_k|} \sum_{t_{m+1}^{k} \in c_k} \log p\left(t_{m+1}^{k} \mid t_{\leq m}^{k}\right) \right), \label{eq:ppl} \\
\operatorname{PPL}(\mathcal{C}) &= \left< \operatorname{PPL}(c_1), \operatorname{PPL}(c_2), \dots, \operatorname{PPL}(c_K) \right>. \label{eq:ppl_trace}
\end{align}
where \( p(t_{m+1}^{k} \mid t_{\leq m}^{k}) \) denotes the model's predicted probability for token \( t_{m+1}^{k} \) given the prefix \( t_{\leq m}^{k} \) within the reasoning trace.

As shown in \autoref{fig:emp_2}(c), perplexity and \( R_{\text{score}} \) are strongly negatively correlated—steps with higher reasoning depth tend to have lower perplexity, which is intuitive since deep reasoning often yields more predictable outputs. However, when comparing the final step \( c_3 \) across stable and Rising-2 triples, we find an interesting phenomenon in \autoref{fig:emp_2}(d): despite having higher \( R_{\text{score}} \), \( c_3 \) in Rising-2 triples has higher perplexity than in stable triples. This suggests that overthinking steps induced by an incorrect verification result in an uncertain or internally unstable generation.

We hypothesize that such overthinking may reflect \textit{spurious verification}—a behavior where the model performs superficial or misguided validation in pursuit of higher reward during RL fine-tuning. This behavior can persist through distillation into smaller models, propagating reasoning hallucinations. Based on this analysis, we identify a third hallucination pattern:
\textbf{Pattern \#3:} Overthinking reasoning steps exhibit a positive correlation between \( R_{\text{score}} \) and perplexity (PPL).

\section{Prompt for Hallucination Patterns Analysis}
\label{app:prompt_hall_pattern}

Prompt for step consistency analysis of \textbf{Q1}: 
\begin{tcolorbox}[colback=gray!20, colframe=gray, coltitle=white, fonttitle=\bfseries]
Please evaluate whether the following reasoning step introduces a new solution approach compared to the preceding steps.  
Respond with a score of 0 or 1, where: \\
0: The step follows the same solution approach as the previous steps. \\
1: The step explores a new solution approach or direction.

Reasoning step:  
\{step content\}

Previous steps:  
\{step content\}

Score (0/1):  
\end{tcolorbox}

Prompt for step correctness analysis of \textbf{Q2}: 
\begin{tcolorbox}[colback=gray!20, colframe=gray, coltitle=white, fonttitle=\bfseries]
Please evaluate whether the following reasoning step aligns with the final answer.  
Respond with a score of 0 or 1, where: \\
0: The step is inconsistent with the final answer. \\
1: The step is consistent with the final answer.

Reasoning step:  
\{step\}

Final answer:  
\{answer\}

Score (0/1):  
\end{tcolorbox}

\section{More Results of Reasoning Hallucination Pattern Analysis}
\label{app:more_result_reason_hall}
\begin{figure}[t]
    \centering
    \begin{subfigure}{0.32\textwidth}
        \includegraphics[width=\linewidth]{graphs/cv_scatter_math.pdf}
    \end{subfigure}
    \hfill
    \begin{subfigure}{0.32\textwidth}
        \includegraphics[width=\linewidth]{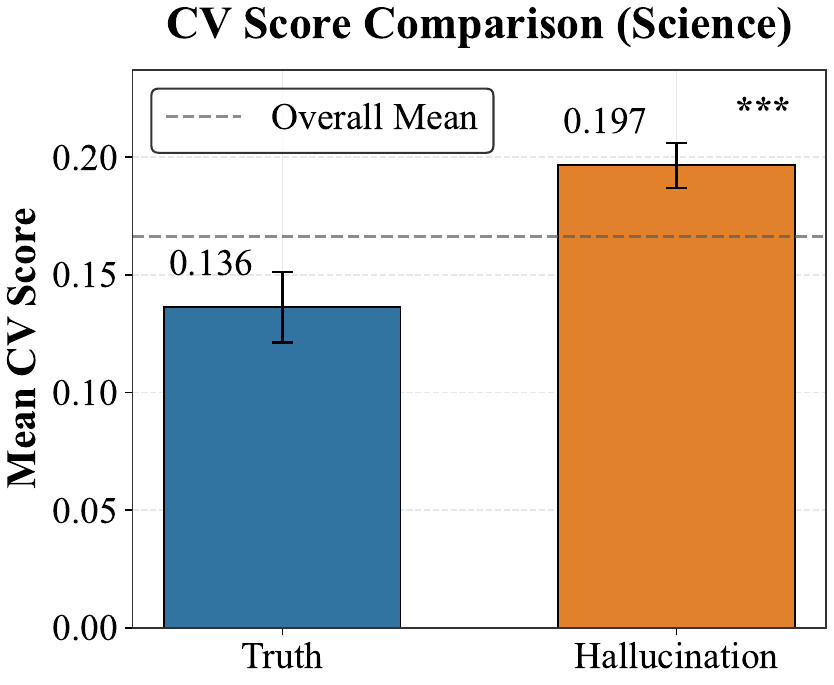}
    \end{subfigure}
    \hfill
    \begin{subfigure}{0.32\textwidth}
        \includegraphics[width=\linewidth]{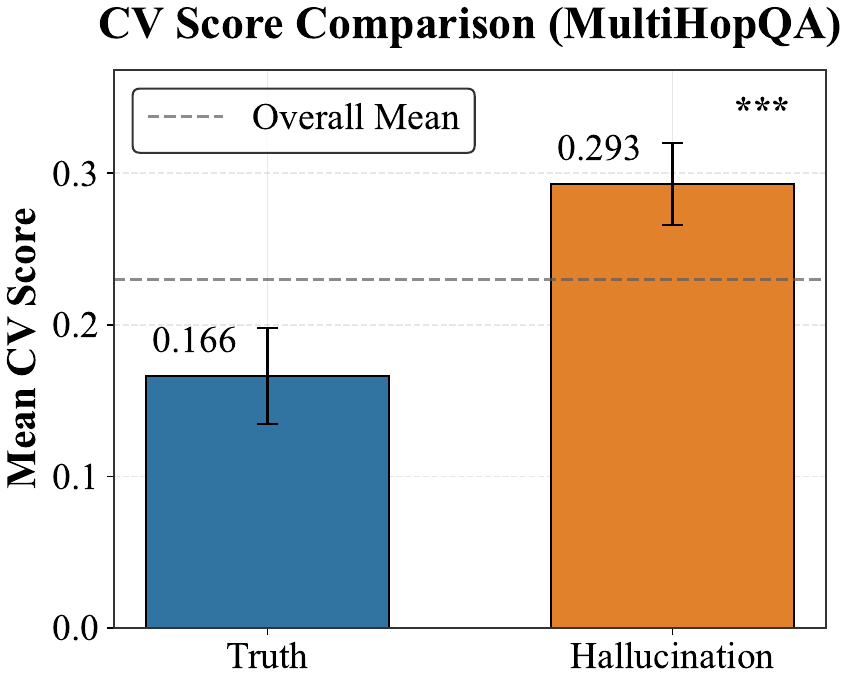}
    \end{subfigure}
    
    \begin{subfigure}{0.31\textwidth}
        \includegraphics[width=\linewidth]{graphs/attention_scores_math.pdf}
    \end{subfigure}
    \hfill
    \begin{subfigure}{0.31\textwidth}
        \includegraphics[width=\linewidth]{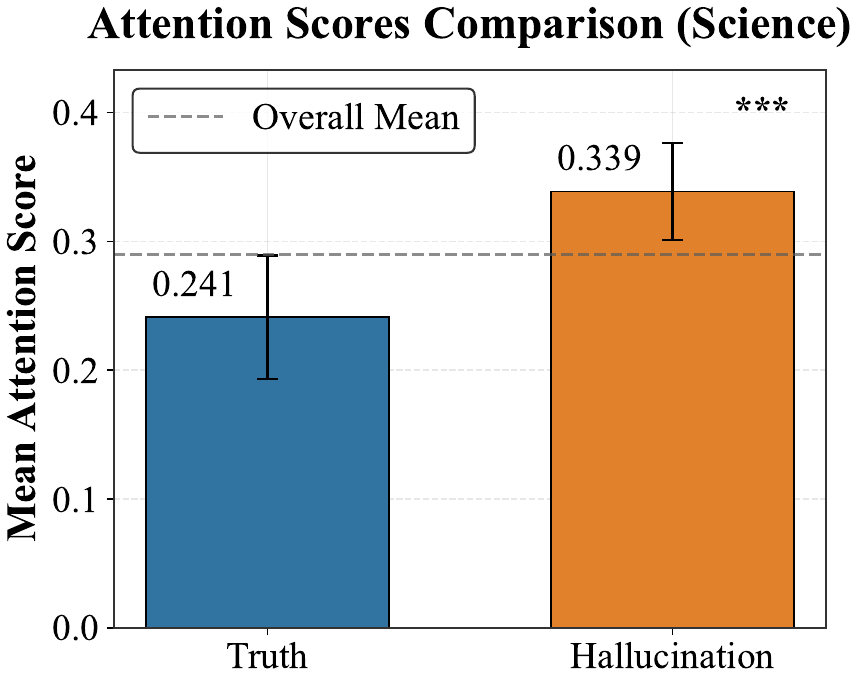}
    \end{subfigure}
    \hfill
    \begin{subfigure}{0.33\textwidth}
        \includegraphics[width=\linewidth]{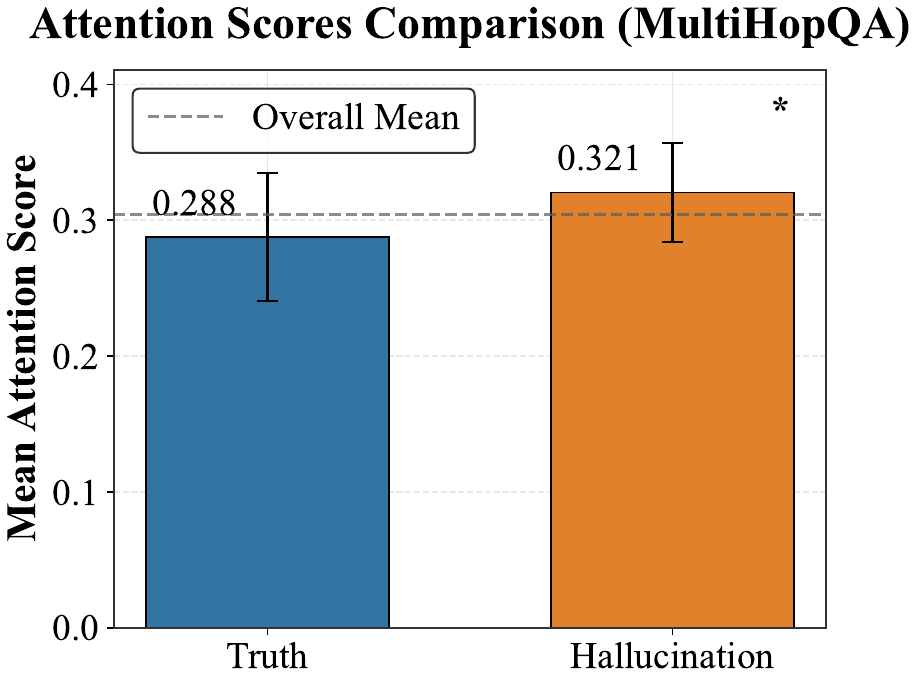}
    \end{subfigure}
    
    \caption{Evaluation of Pattern~\#1 and Pattern~\#2 on ReTruthQA. Asterisks indicate statistical significance based on a t-test: * for $p$-value < 0.05, and *** for $p$-value < 0.001.}
    \label{fig:emp_1_result_more}
\end{figure}
The hyperparameter settings involved in Section~\ref{sec:reason_analysis} are as follows. The constant \( r \), which controls the size of the early-step window, is empirically set to \( r = 2 \). The constant \( \eta \), which defines the portion of late reasoning steps, is set to \( \eta = 0.75 \). The constant \( K \), used in computing attention to earlier steps, is set to \( K = 5 \). The threshold \( \tau \) for identifying potentially overthinking steps is set to \( \tau = 4 \). These hyperparameters are derived from case analysis and are applied consistently throughout the subsequent reasoning hallucination detection and mitigation experiments.

The validity of Pattern \#1 and Pattern \#2 is verified across all domains of ReTruthQA, with experimental results shown in Figure~\ref{fig:emp_1_result_more}, where across all three domains, hallucinated reasoning traces consistently exhibit significantly higher CV scores and Attention scores than truthful traces.

\section{Evaluation and Baseline Details of Reasoning Hallucination Detection}
\label{app:retruthqa_eval}
Based on ReTruthQA, we design two evaluation settings for RHD model: \textbf{(1) Binary Detection Setting:} This setting assesses the model’s ability to detect hallucinations in individual question-reasoning pairs $(Q, C)$, measuring detection performance using the Area Under the ROC Curve (\textit{\textbf{AUC}}) and Pearson Correlation Coefficient (\textit{\textbf{PCC}});
\textbf{(2) Multi-Trace Ranking Setting:} This setting evaluates the model’s ability to identify the truthful answer among multiple reasoning traces for the same question $(Q, \{C_1, C_2, \ldots, C_3\})$. We follow the evaluation setup of TruthfulQA-MC~\citep{lin2021truthfulqa}, and report the following metrics: \textit{\textbf{MC1}}: The percentage of instances where the hallucination score of the most hallucinated reasoning trace exceeds that of all truthful traces;
    \textit{\textbf{MC2}}: The normalized total hallucination score assigned to the hallucinated reasoning traces;
    \textit{\textbf{MC3}}: The percentage of hallucinated reasoning traces that receive a higher hallucination score than all truthful traces.
These metrics collectively measure the ranking quality of hallucination detection in multi-sample generation settings.

For baselines, we consider the following categories:
\textbf{(1) Ensemble-based self-evaluation methods}, where hallucination scores are obtained through repeated generation, self-verification, or peer voting among LLMs. This category includes ChainPoll~\citep{friel2023chainpoll}, LMvLM~\citep{cohen2023lm}, and SelfCheckGPT~\citep{manakul2023selfcheckgpt}.
\textbf{(2) Uncertainty-based methods}, which estimate hallucination likelihood based on model uncertainty, including P(True)~\citep{kadavath2022language}, LN-Entropy~\citep{ren2022out}, and Perplexity (PPL)~\citep{malinin2020uncertainty}.
\textbf{(3) Self-awareness-based methods}, which rely on internal model representations to detect hallucinations, such as UQAC~\citep{li2025languagemodeluncertaintyquantification} and EigenScore~\citep{cheninside}.
\textbf{(4) LLM-as-Critic models}, including \texttt{GPT-4o}~\citep{achiam2023gpt} and \texttt{Qwen2.5-32B}~\citep{yang2024qwen2}, which act as external evaluators of reasoning traces.
\textbf{(5) Process reward models}, such as \texttt{Qwen2.5-Math-7B-PRM800K}~\citep{processbench} and \texttt{Qwen2.5-Math-PRM-7B}~\citep{prmlessons}, trained with step-level supervision for reasoning evaluation.
\textbf{(6) Length-based scoring}, motivated by recent findings that longer reasoning traces are more prone to hallucinations~\citep{zeng2025revisiting}, we include \texttt{Length-Score}, which directly uses the length of the reasoning trace as its hallucination score.

\section{Implementation Details for Reasoning Hallucination Detection}
\label{app:rhd_exp}
We conduct all experiments on machines equipped with NVIDIA A6000 GPUs and 52-core Intel(R) Xeon(R) Gold 6230R CPUs running at 2.10GHz. We utilize the Huggingface \texttt{Transformers} and \texttt{TRL} libraries to implement and run our experiments. During response generation, we use random sampling with a temperature of 0.7 and a maximum decoding length of 15{,}000 tokens for Math tasks and 10{,}000 tokens for all other tasks. For Reasoning Hallucination Detection (RHD), we perform two-fold validation to select optimal hyperparameters, while baselines are tuned within the ranges specified in their original works. To ensure stability, all randomized experiments are repeated three times and the average results are reported.

We perform a grid search over the interval $[0, 1]$ with a step size of 0.1 to determine the best combination of reasoning score weights $\alpha_1$, $\alpha_2$, $\alpha_3$, and $\alpha_4$ using two-fold validation to select the hyperparameters. For R1-7B, the best weights in the \texttt{Math} domain are $\alpha_1 = 0$, $\alpha_2 = 0.4$, $\alpha_3 = 0$, and $\alpha_4 = 0.3$ for the Multi-Trace Ranking setting, and $\alpha_1 = 0$, $\alpha_2 = 0.9$, $\alpha_3 = 0.8$, and $\alpha_4 = 0.4$ for the Binary Detection setting. In the \texttt{Science} domain, the best weights are $\alpha_1 = 0.1$, $\alpha_2 = 1.0$, $\alpha_3 = 0$, and $\alpha_4 = 0$ for Multi-Trace Ranking, and $\alpha_1 = 0$, $\alpha_2 = 0.7$, $\alpha_3 = 0.2$, and $\alpha_4 = 0$ for Binary Detection. In the \texttt{MultiHopQA} domain, the best weights are $\alpha_1 = 0.4$, $\alpha_2 = 0.1$, $\alpha_3 = 0.6$, and $\alpha_4 = 0.4$ for Multi-Trace Ranking, and $\alpha_1 = 0$, $\alpha_2 = 0$, $\alpha_3 = 0.3$, and $\alpha_4 = 0$ for Binary Detection.

For R1-14B, the best weights in the \texttt{Math} domain are $\alpha_1 = 0.3$, $\alpha_2 = 0.7$, $\alpha_3 = 0.1$, and $\alpha_4 = 0.1$ for Multi-Trace Ranking, and $\alpha_1 = 0$, $\alpha_2 = 0.3$, $\alpha_3 = 1.0$, and $\alpha_4 = 0.2$ for Binary Detection. In the \texttt{Science} domain, we obtain $\alpha_1 = 0$, $\alpha_2 = 0.5$, $\alpha_3 = 0.5$, and $\alpha_4 = 0.1$ for Multi-Trace Ranking, and $\alpha_1 = 0$, $\alpha_2 = 0$, $\alpha_3 = 0.8$, and $\alpha_4 = 0.1$ for Binary Detection. In the \texttt{MultiHopQA} domain, the optimal weights are $\alpha_1 = 0.7$, $\alpha_2 = 0.9$, $\alpha_3 = 0.1$, and $\alpha_4 = 0.0$ for Multi-Trace Ranking, and $\alpha_1 = 1.0$, $\alpha_2 = 0$, $\alpha_3 = 0.1$, and $\alpha_4 = 0.1$ for Binary Detection.

Candidate reasoning score layers $\mathcal{J}$ are selected from $\{14, 16, 18, 20, 22, 24, 26\}$ for R1-7B and from $\{32, 36, 40, 42, 44, 46\}$ for R1-14B, while attention score layers $\mathcal{L}$ are fixed across models as $\{1, 3, 5, 7, 9, 11, 13\}$. The models used in our experiments, \texttt{DeepSeek-R1-Distill-Qwen-7B} and \texttt{DeepSeek-R1-Distill-Qwen-14B}, are publicly available at \url{https://huggingface.co/deepseek-ai/DeepSeek-R1-Distill-Qwen-7B} and \url{https://huggingface.co/deepseek-ai/DeepSeek-R1-Distill-Qwen-14B}, respectively.

\begin{table}[t]
\centering
\resizebox{0.99\linewidth}{!}{
\renewcommand\arraystretch{1.15}
\setlength{\tabcolsep}{1.5mm}
\begin{tabular}{l|l|ccc|ccc|ccc}
\toprule
\multirow{2}{*}{\textbf{Model}} & \multirow{2}{*}{\textbf{Variant}} & \multicolumn{3}{c|}{\textbf{MATH}} & \multicolumn{3}{c|}{\textbf{Science}} & \multicolumn{3}{c}{\textbf{MultiHopQA}} \\
\cmidrule(lr){3-5} \cmidrule(lr){6-8} \cmidrule(lr){9-11}
~ & ~ & \textbf{MC1} & \textbf{MC2} & \textbf{MC3} & \textbf{MC1} & \textbf{MC2} & \textbf{MC3} & \textbf{MC1} & \textbf{MC2} & \textbf{MC3} \\
\midrule
\multirow{5}{*}{R1-7B} 
& RHD & 0.6591 & 0.4765 & 0.5699 & 0.6207 & 0.5448 & 0.6009 & 0.7660 & 0.6255 & 0.7103 \\
& RHD (w/o $\operatorname{Avg}(\mathcal{R}_{\text{score}})$) & 0.6591 & 0.4765 & 0.5699 & 0.6128 & 0.5307 & 0.5934 & 0.7383 & 0.6032 & 0.7082 \\
& RHD (w/o CV Score) & 0.6364 & 0.4663 & 0.5330 & 0.4483 & 0.3862 & 0.4977 & 0.7447 & 0.6043 & 0.6996 \\
& RHD (w/o Attention Score) & 0.6591 & 0.4765 & 0.5699 & 0.6207 & 0.5448 & 0.6009 & 0.6383 & 0.5372 & 0.6123 \\
& RHD (w/o PCC Score) & 0.5909 & 0.3830 & 0.5210 & 0.6207 & 0.5448 & 0.6009 & 0.6809 & 0.5553 & 0.6323 \\
\midrule
\multirow{5}{*}{R1-14B} 
& RHD & 0.3692 & 0.3005 & 0.4644 & 0.6667 & 0.4714 & 0.5671 & 0.5785 & 0.4421 & 0.5154 \\
& RHD (w/o $\operatorname{Avg}(\mathcal{R}_{\text{score}})$) & 0.3538 & 0.2867 & 0.4847 & 0.7241 & 0.4609 & 0.5531 & 0.5589 & 0.4284 & 0.5290 \\
& RHD (w/o CV Score) & 0.3692 & 0.2882 & 0.4725 & 0.6470 & 0.4484 & 0.5332 & 0.5455 & 0.4273 & 0.5403 \\
& RHD (w/o Attention Score) & 0.3231 & 0.2692 & 0.4503 & 0.6724 & 0.4511 & 0.5190 & 0.5702 & 0.4322 & 0.5180 \\
& RHD (w/o PCC Score) & 0.3692 & 0.2882 & 0.4725 & 0.6724 & 0.4601 & 0.5683 & 0.5785 & 0.4421 & 0.5154 \\
\bottomrule
\end{tabular}
}
\caption{Ablation study of the RHD model on three different domains of ReTruthQA. Each row removes one component of the hallucination score.}
\label{tab:rhd-ablation}
\end{table}

\section{Ablation Study of RHD}
\label{app:ablation_rhd}
In this section, we analyze the contribution of each module within the RHD model to reasoning hallucination detection. 
As shown in Table~\ref{tab:rhd-ablation}, removing any single component leads to a significant performance drop on most datasets in the Reasoning Hallucination Detection task. This validates the effectiveness of adopting a multivariate regression formulation, where all components jointly serve as covariates. Although some coefficients may appear less influential in certain domains, they demonstrate notable impact in others. This observation suggests that different domains exhibit distinct hallucination pattern preferences, further supporting the validity of the empirically discovered patterns, which can be effectively leveraged for reasoning hallucination detection.

\begin{table}[t]
\centering
\resizebox{0.85\linewidth}{!}{
\renewcommand\arraystretch{1.15}
\setlength{\tabcolsep}{1.5mm}
\begin{tabular}{l|ccc|ccc|ccc}
\toprule
\multirow{2}{*}{\textbf{Layers}} & \multicolumn{3}{c|}{\textbf{Math}} & \multicolumn{3}{c|}{\textbf{Science}} & \multicolumn{3}{c}{\textbf{MultiHopQA}} \\
\cmidrule(lr){2-4} \cmidrule(lr){5-7} \cmidrule(lr){8-10}
~ & \textbf{MC1} & \textbf{MC2} & \textbf{MC3} & \textbf{MC1} & \textbf{MC2} & \textbf{MC3} & \textbf{MC1} & \textbf{MC2} & \textbf{MC3} \\
\midrule
High   & 0.6591 & 0.4765 & 0.5699 & 0.6207 & 0.5448 & 0.6009 & 0.7234 & 0.5957 & 0.6799 \\
Middle & 0.6591 & 0.4765 & 0.5699 & 0.6207 & 0.5448 & 0.6009 & 0.7021 & 0.5862 & 0.6678 \\
Low    & 0.6591 & 0.4765 & 0.5699 & 0.6207 & 0.5448 & 0.6009 & 0.7660 & 0.6255 & 0.7103 \\
\bottomrule
\end{tabular}
}
\caption{Impact of selecting candidate layers from different depth layers of LRMs.}
\label{tab:attention-layer-ablation}
\end{table}

\begin{figure}[h]
    \centering
    \begin{subfigure}{0.47\textwidth}
        \includegraphics[width=\linewidth]{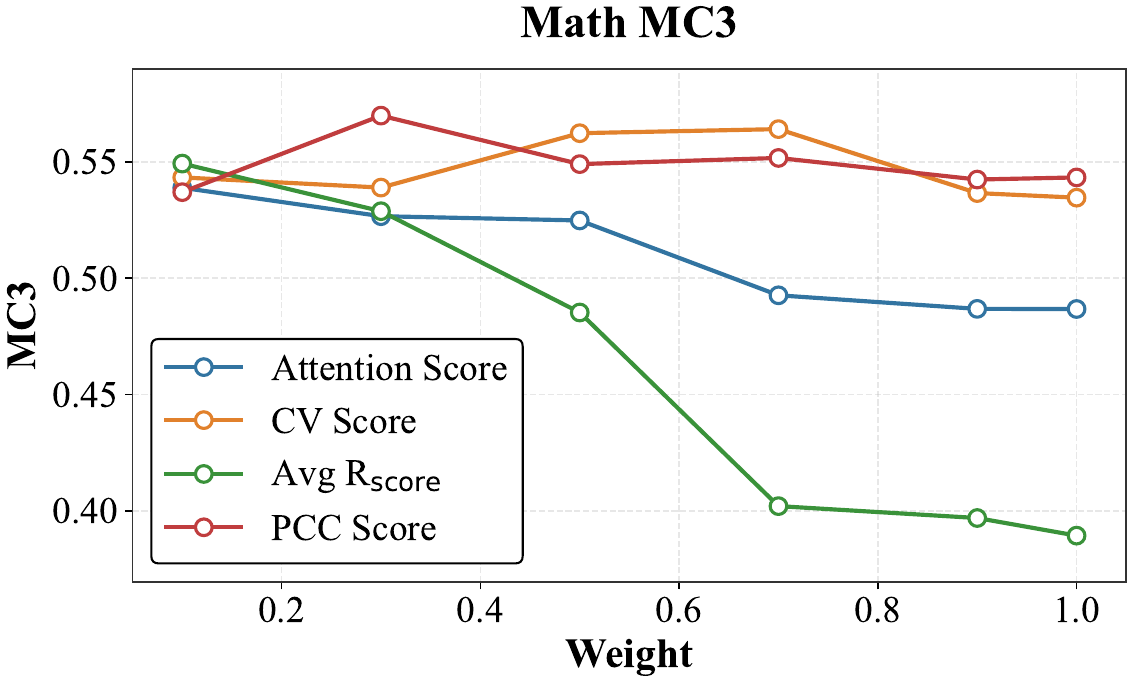}
        \caption*{ReTruthQA (Math)}
    \end{subfigure}
    \hfill
    \begin{subfigure}{0.47\textwidth}
        \includegraphics[width=\linewidth]{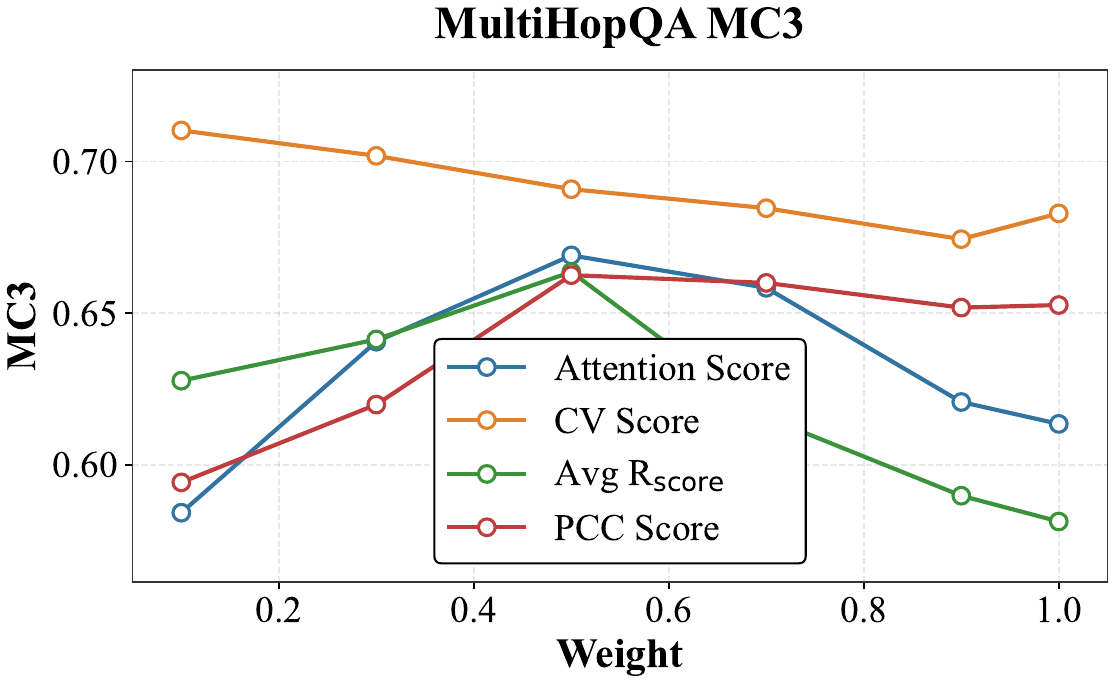}
        \caption*{ReTruthQA (MultiHopQA)}
    \end{subfigure}
    \caption{We conduct a sensitivity analysis of each module in RHD, using R1-7B on the Math and MultiHopQA subsets of ReTruthQA. We vary the weights assigned to different components and observe the resulting performance on the MC3 metric.}
    \label{fig:sen_analysis}
\end{figure}

\section{Sensitivity Analysis of RHD}
\label{app:sen_rhd}
In this section, we conduct sensitivity analysis experiments to investigate the impact of design choices in RHD. Inspired by the underlying reasoning mechanism, we fix the reasoning score to be extracted from the later layers of LRMs. Our primary focus is on selecting the appropriate layers for computing the attention score.
Specifically, we evaluate three different layer groups: shallow layers (1, 3, 5, 7, 9, 11, 13), middle layers (8, 10, 12, 14, 16, 18), and deep layers (14, 16, 18, 20, 22, 24, 26) on R1-7B. The experimental results are shown in Table~\ref{tab:attention-layer-ablation}. We observe that, across the \texttt{Math} and \texttt{Science} domains, the choice of attention layers has limited influence on final performance. In contrast, for the \texttt{MultiHopQA} domain, shallow layers yield stronger results, aligning with the mechanistic interpretation that earlier layers are primarily responsible for information transmission. Based on these findings, we select the shallow layers as candidate layers for computing the attention score.

We further perform sensitivity analysis on influential feature weights in RHD across domains. We vary the feature weights in \{0.1, 0.3, 0.5, 0.7, 0.9\}, and present the results in Figure~\ref{fig:sen_analysis}. We observe that most features exhibit an initial increase in performance followed by either a decline or stabilization. The limited variance across settings indicates that the model is not overly sensitive to individual hyperparameter values, demonstrating the robustness and stability of the RHD framework.

\section{Implementation Details for Reasoning Hallucination Mitigation}
\label{app:imp_rhm}

We fine-tune the models for reasoning hallucination mitigation using a RL framework with the following hyperparameters: batch size of 8, learning rate of $1.0 \times 10^{-6}$, and 1 training epoch. We enable gradient checkpointing to reduce memory usage. The model is configured with a maximum prompt length of 512 and a maximum completion length of 7680. For parameter-efficient tuning, we adopt LoRA with rank $r=16$ and $\alpha=16$, applied to all linear layers (\texttt{lora\_target\_modules=all-linear}). During each training step, we sample 16 generations per query.

The reward function is a weighted sum of three components: (1) an accuracy reward that combines a rule-based parser~\citep{math_verify} and LLM-as-a-Judge~\citep{lightman2023let} to determine correctness, addressing the issue where the final answer is correct but fails rule-based extraction (reward = 1 for correct, 0 for incorrect); (2) a \textbf{format reward} that ensures adherence to the required reasoning format \texttt{<think>\textbackslash n...\textbackslash n</think>\textbackslash n<answer>\textbackslash n...\textbackslash n</answer>} (reward = 1 if the format is correct, 0 otherwise); and (3) a \textbf{tag count reward} that softly encourages the inclusion of each of the four required tags (\texttt{<think>}, \texttt{</think>}, \texttt{<answer>}, \texttt{</answer>}) by assigning 0.25 for each tag present. The reward weights are set to 1.0, 0.1, and 0.1 for the accuracy, format, and tag count rewards, respectively.

For evaluation, we use the same accuracy-based metric as in training, and report results by averaging over four sampled generations per input. The fine-tuned model, \texttt{DeepSeek-R1-Distill-Qwen-1.5B}, is publicly available at \url{https://huggingface.co/deepseek-ai/DeepSeek-R1-Distill-Qwen-1.5B}.

\begin{figure}[h]
    \centering
    \begin{subfigure}{0.47\textwidth}
        \includegraphics[width=\linewidth]{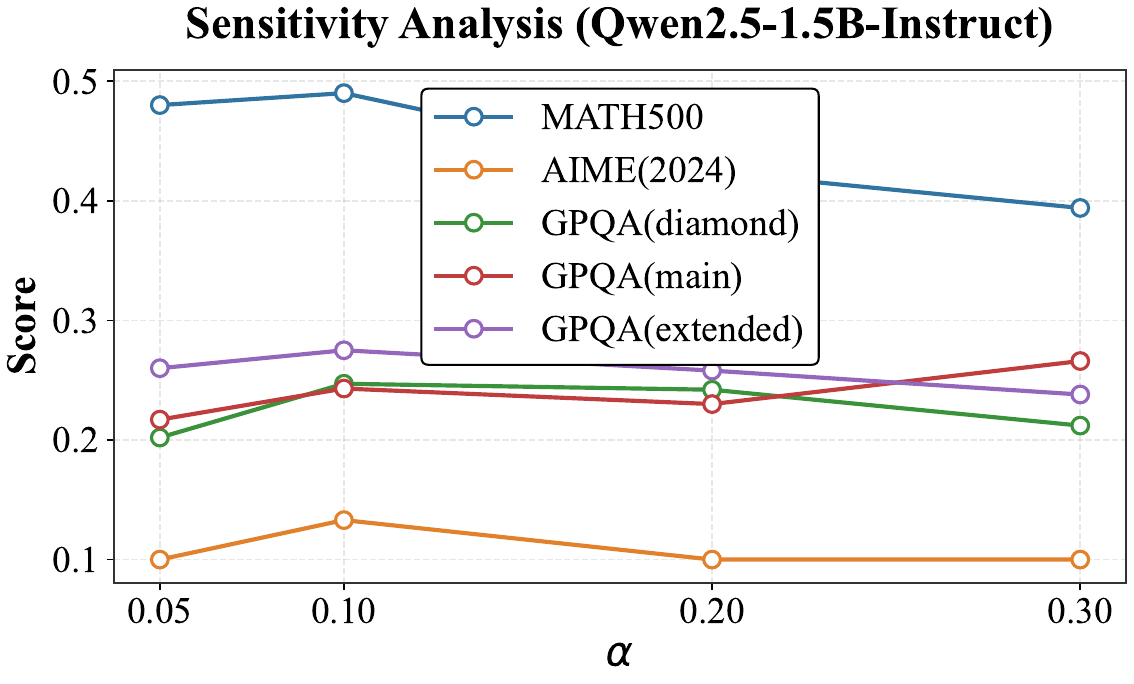}
    \end{subfigure}
    \hfill
    \begin{subfigure}{0.47\textwidth}
        \includegraphics[width=\linewidth]{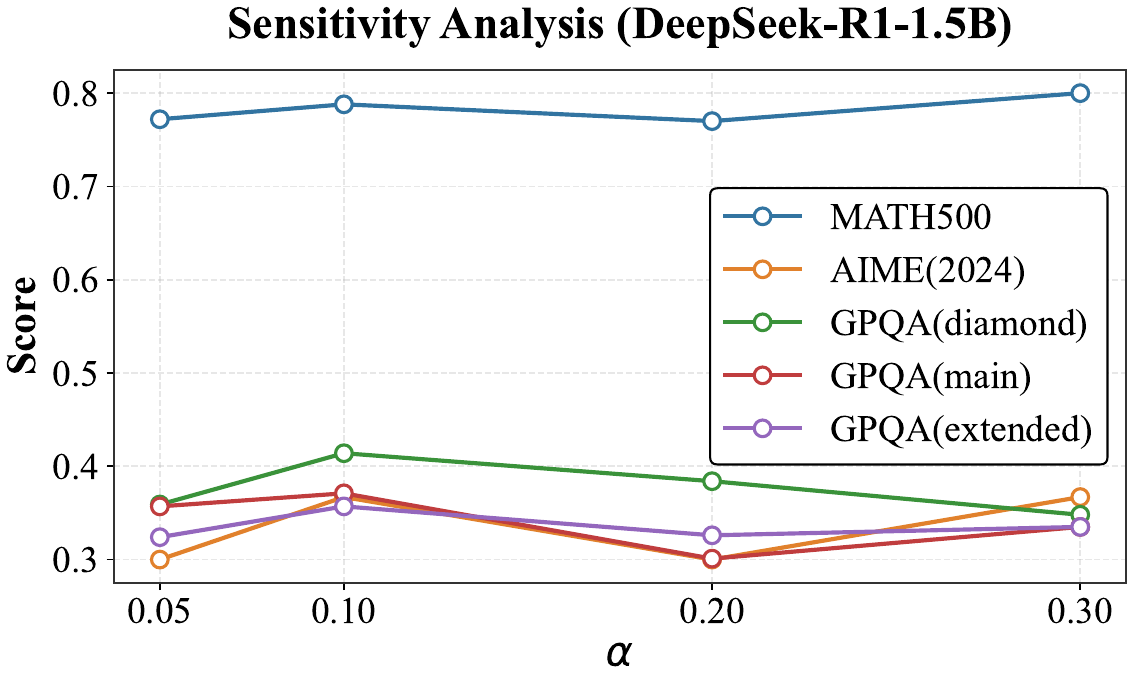}
    \end{subfigure}
    \caption{We conduct a sensitivity analysis on the weight of the reasoning score reward in GRPO-R, evaluating its impact on the accuracy metric. Experiments are carried out on both Qwen2.5-1.5B-Instruct and DeepSeek-R1-1.5B by varying the weight parameter $\alpha$.}
    \label{fig:sen_analysis_grpor}
\end{figure}

\section{Sensitivity Analysis of Reasoning Score Weight in GRPO-R}
\label{app:sen_rhm}
To investigate the sensitivity of the reasoning score reward weight $\alpha$ in the GRPO-R objective, we conduct experiments on both \texttt{DeepSeek-R1-1.5B} and \texttt{Qwen2.5-1.5B-Instruct}. We vary $\alpha$ in the range $[0.05, 0.1, 0.2, 0.3]$ and evaluate the models' performance accordingly.

Experimental results in \autoref{fig:sen_analysis_grpor} indicate that both models achieve the best average performance when $\alpha = 0.1$. As $\alpha$ increases beyond this value, we observe a gradual decline in performance. These results suggest that incorporating the reasoning score reward can effectively mitigate reasoning hallucinations without compromising accuracy, as long as it remains a secondary signal. However, overemphasizing the reasoning score (i.e., assigning it a large weight) can lead to a degradation in the model's ability to optimize for correctness, indicating that the reasoning signal should not dominate the outcome-based reward objective.

\begin{table}[t]
    \centering
    \caption{Accuracy of distilled models across benchmarks using different sampling strategies. Distillation is performed on Qwen2.5-1.5B-Instruct using reasoning traces from R1-14B.}
    \resizebox{\linewidth}{!}{
    \begin{tabular}{l|ccccc}
        \toprule
        \textbf{Method} & \textbf{MATH500} & \textbf{AIME (2024)} & \textbf{GPQA (diamond)} & \textbf{GPQA (main)} & \textbf{GPQA (extended)} \\
        \midrule
        Qwen2.5-1.5B-Instruct & 0.466 & 0.100 & 0.202 & 0.197 & 0.211 \\
        \midrule
        Random 20\%           & 0.504 & 0.100 & 0.247 & \textbf{0.230} & 0.242 \\
        RHD 20\%              & \textbf{0.520} & 0.100 & \textbf{0.263} & 0.210 & \textbf{0.249} \\
        \midrule
        Random 50\%      & 0.488 & 0.033 & 0.187 & 0.248 & \textbf{0.266} \\
        RHD 50\%              & \textbf{0.516} & \textbf{0.200} & \textbf{0.247} & \textbf{0.250} & 0.242 \\
        \midrule
        100\%       & 0.488 & 0.100 & 0.217 & 0.210 & 0.214 \\

        \bottomrule
    \end{tabular}
    }
    \label{tab:distill}
\end{table}

\section{RHD-Guided Reasoning Distillation}
\label{app:rhd_distill}
Distilling long-chain-of-thought data from large reasoning models to fine-tune smaller LLMs has become a widely adopted strategy for improving reasoning capabilities~\citep{DeepSeekAI2025}. However, directly fine-tuning small LLMs on raw LRM-generated data risks transferring undesirable reasoning behaviors such as shallow pattern matching or overthinking, potentially introducing reasoning hallucinations into the smaller models. To address this issue, we propose using the RHD score to rank distillation data and select more truthful samples for training.

The distillation setup uses a learning rate of $5.0 \times 10^{-5}$, batch size of 8, and LoRA applied to all linear layers with parameters \texttt{lora\_r} = 16 and \texttt{lora\_alpha} = 16. We use the training data from the hallucination mitigation experiment where \texttt{R1-14B} produces correct answers, along with their corresponding reasoning traces and final answers. We then score each reasoning trace using the RHD metric and sort the data in descending order. The top 20\% and 50\% of ranked samples are distilled into a smaller model, \texttt{R1-1.5B}, and compared against randomly sampled subsets of 20\%, 50\%, and 100\% of the same data.

Results, as shown in Table~\ref{tab:distill}, demonstrate that RHD-guided distillation consistently yields better performance across most evaluation benchmarks. In contrast, distillation using 100\% of the raw data results in degraded performance, likely due to noise introduced by hallucinated or low-quality samples. These findings validate the effectiveness of RHD in selecting high-quality data and mitigating reasoning hallucinations in downstream small LLMs during the distillation process.

\end{document}